\title[Local AdaAlter]{Local AdaAlter: Communication-Efficient Stochastic Gradient Descent with Adaptive Learning Rates}
\begin{document}

\allowdisplaybreaks

\maketitle

\def\Blue{\color{blue}}
\def\Purple{\color{purple}}

\def\A{{\bf A}}
\def\a{{\bf a}}
\def\B{{\bf B}}
\def\C{{\bf C}}
\def\c{{\bf c}}
\def\D{{\bf D}}
\def\d{{\bf d}}
\def\E{{\mathbb{E}}}
\def\F{{\bf F}}
\def\e{{\bf e}}
\def\f{{\bf f}}
\def\G{{\bf G}}
\def\H{{\bf H}}
\def\I{{\bf I}}
\def\K{{\bf K}}
\def\L{{\bf L}}
\def\M{{\bf M}}
\def\m{{\bf m}}
\def\N{{\bf N}}
\def\n{{\bf n}}
\def\Q{{\bf Q}}
\def\q{{\bf q}}
\def\R{{\mathbb{R}}}
\def\S{{\bf S}}
\def\s{{\bf s}}
\def\T{{\bf T}}
\def\U{{\bf U}}
\def\u{{\bf u}}
\def\V{{\bf V}}
\def\tv{{\tilde v}}
\def\v{{\bf v}}
\def\W{{\bf W}}
\def\w{{\bf w}}
\def\X{{\bf X}}
\def\x{{\bf x}}
\def\bx{{\bar{x}}}
\def\Y{{\bf Y}}
\def\y{{\bf y}}
\def\Z{{\bf Z}}
\def\tdr{{\tilde r}}
\def\z{{\bf z}}
\def\0{{\bf 0}}
\def\1{{\bf 1}}

\def\AM{{\mathcal A}}
\def\CM{{\mathcal C}}
\def\DM{{\mathcal D}}
\def\GM{{\mathcal G}}
\def\FM{{\mathcal F}}
\def\IM{{\mathcal I}}
\def\NM{{\mathcal N}}
\def\OM{{\mathcal O}}
\def\SM{{\mathcal S}}
\def\TM{{\mathcal T}}
\def\UM{{\mathcal U}}
\def\XM{{\mathcal X}}
\def\YM{{\mathcal Y}}
\def\RB{{\mathbb R}}

\def\TX{\tilde{\bf X}}
\def\tx{\tilde{\bf x}}
\def\ty{\tilde{\bf y}}
\def\TZ{\tilde{\bf Z}}
\def\tz{\tilde{\bf z}}
\def\hd{\hat{d}}
\def\HD{\hat{\bf D}}
\def\hx{\hat{\bf x}}
\def\TD{\tilde{\Delta}}
\def\tg{\tilde{g}}
\def\tmu{\tilde{\mu}}

\def\alp{\mbox{\boldmath$\alpha$\unboldmath}}
\def\bet{\mbox{\boldmath$\beta$\unboldmath}}
\def\epsi{\mbox{\boldmath$\epsilon$\unboldmath}}
\def\etab{\mbox{\boldmath$\eta$\unboldmath}}
\def\ph{\mbox{\boldmath$\phi$\unboldmath}}
\def\pii{\mbox{\boldmath$\pi$\unboldmath}}
\def\Ph{\mbox{\boldmath$\Phi$\unboldmath}}
\def\Ps{\mbox{\boldmath$\Psi$\unboldmath}}
\def\tha{\mbox{\boldmath$\theta$\unboldmath}}
\def\Tha{\mbox{\boldmath$\Theta$\unboldmath}}
\def\muu{\mbox{\boldmath$\mu$\unboldmath}}
\def\Si{\mbox{\boldmath$\Sigma$\unboldmath}}
\def\si{\mbox{\boldmath$\sigma$\unboldmath}}
\def\Gam{\mbox{\boldmath$\Gamma$\unboldmath}}
\def\Lam{\mbox{\boldmath$\Lambda$\unboldmath}}
\def\De{\mbox{\boldmath$\Delta$\unboldmath}}
\def\Ome{\mbox{\boldmath$\Omega$\unboldmath}}
\def\TOme{\mbox{\boldmath$\hat{\Omega}$\unboldmath}}
\def\vps{\mbox{\boldmath$\varepsilon$\unboldmath}}
\newcommand{\ti}[1]{\tilde{#1}}
\def\Ncal{\mathcal{N}}
\def\argmax{\mathop{\rm argmax}}
\def\argmin{\mathop{\rm argmin}}
\providecommand{\abs}[1]{\lvert#1\rvert}
\providecommand{\norm}[2]{\lVert#1\rVert_{#2}}

\def\Zs{{\Z_{\mathrm{S}}}}
\def\Zl{{\Z_{\mathrm{L}}}}
\def\Yr{{\Y_{\mathrm{R}}}}
\def\Yg{{\Y_{\mathrm{G}}}}
\def\Yb{{\Y_{\mathrm{B}}}}
\def\Ar{{\A_{\mathrm{R}}}}
\def\Ag{{\A_{\mathrm{G}}}}
\def\Ab{{\A_{\mathrm{B}}}}
\def\As{{\A_{\mathrm{S}}}}
\def\Asr{{\A_{\mathrm{S}_{\mathrm{R}}}}}
\def\Asg{{\A_{\mathrm{S}_{\mathrm{G}}}}}
\def\Asb{{\A_{\mathrm{S}_{\mathrm{B}}}}}
\def\Or{{\Ome_{\mathrm{R}}}}
\def\Og{{\Ome_{\mathrm{G}}}}
\def\Ob{{\Ome_{\mathrm{B}}}}

\def\Expect{\mathbb{E}}

\def\vec{\mathrm{vec}}
\def\fold{\mathrm{fold}}
\def\index{\mathrm{index}}
\def\sgn{\mathrm{sgn}}
\def\tr{\mathrm{tr}}
\def\rk{\mathrm{rank}}
\def\diag{\mathsf{diag}}
\def\const{\mathrm{Const}}
\def\dg{\mathsf{dg}}
\def\st{\mathsf{s.t.}}
\def\vect{\mathsf{vec}}
\def\MCAR{\mathrm{MCAR}}
\def\MSAR{\mathrm{MSAR}}
\def\etal{{\em et al.\/}\,}
\def\prox{\mathrm{prox}^h_\gamma}
\newcommand{\indep}{{\;\bot\!\!\!\!\!\!\bot\;}}

\newcommand{\mtrxt}[1]{{#1}^\top}
\newcommand{\mtrx}[4]{\left[\begin{matrix}#1 & #2 \\ #3 & #4\end{matrix}\right]}
\DeclarePairedDelimiter\vnorm{\lVert}{\rVert}
\DeclarePairedDelimiterX{\innerprod}[2]{\langle}{\rangle}{#1, #2}

\def\Lsize{\hbox{\space \raise-2mm\hbox{$\textstyle \L \atop \scriptstyle {m\times 3n}$} \space}}
\def\Ssize{\hbox{\space \raise-2mm\hbox{$\textstyle \S \atop \scriptstyle {m\times 3n}$} \space}}
\def\Osize{\hbox{\space \raise-2mm\hbox{$\textstyle \Ome \atop \scriptstyle {m\times 3n}$} \space}}
\def\Tsize{\hbox{\space \raise-2mm\hbox{$\textstyle \T \atop \scriptstyle {3n\times n}$} \space}}
\def\Bsize{\hbox{\space \raise-2mm\hbox{$\textstyle \B \atop \scriptstyle {m\times n}$} \space}}

\newcommand{\twopartdef}[4]
{
	\left\{
		\begin{array}{ll}
			#1 & \mbox{if } #2 \\
			#3 & \mbox{if } #4
		\end{array}
	\right.
}

\newcommand{\tabincell}[2]{\begin{tabular}{@{}#1@{}}#2\end{tabular}}

\renewcommand{\algorithmicrequire}{\textbf{Input:}} % Use Input in the format of Algorithm
\renewcommand{\algorithmicensure}{\textbf{Output:}} % Use Output in the format of Algorithm
\newcommand{\NewProcedure}[1]{\Statex\hspace{-\algorithmicindent} {\large\underline{\textbf{{#1}:}}} \setcounter{ALG@line}{0}}

\DeclarePairedDelimiter\ceil{\lceil}{\rceil}
\DeclarePairedDelimiter\floor{\lfloor}{\rfloor}

\newcommand{\ip}[2]{\left\langle #1, #2 \right \rangle}

\newtheorem{assumption}{Assumption}

\newcommand{\comment}[1]{%
 \tag*{$\triangleright$ #1}
}

\newcommand{\mcir}[1]{{\mbox{\large \textcircled{\small #1}}}}

% table
\newcounter{NoTableEntry}
\renewcommand*{\theNoTableEntry}{NTE-\the\value{NoTableEntry}}

\newcommand*{\strike}[2]{%
  \multicolumn{1}{#1}{%
    \stepcounter{NoTableEntry}%
    \vadjust pre{\zsavepos{\theNoTableEntry t}}% top
    \vadjust{\zsavepos{\theNoTableEntry b}}% bottom
    \zsavepos{\theNoTableEntry l}% left
    \hspace{0pt plus 1filll}%
    #2% content
    \hspace{0pt plus 1filll}%
    \zsavepos{\theNoTableEntry r}% right
    \tikz[overlay]{%
      \draw
        let
          \n{llx}={\zposx{\theNoTableEntry l}sp-\zposx{\theNoTableEntry r}sp-\tabcolsep},
          \n{urx}={\tabcolsep},
          \n{lly}={\zposy{\theNoTableEntry b}sp-\zposy{\theNoTableEntry r}sp},
          \n{ury}={\zposy{\theNoTableEntry t}sp-\zposy{\theNoTableEntry r}sp}
        in
        (\n{llx}, \n{lly}) -- (\n{urx}, \n{ury})
      ;
    }% 
  }%
}

% \newcommand{\ig}[1]
% {
%     {%
%         {\bf \color{cyan} IG comment: #1}
%     }{}
% }

% \newcommand{\igc}[1]
% {
%     {%
%         {\color{magenta} #1}
%     }{}
% }

% \newcommand{\cx}[1]
% {
%     {%
%         {\bf \color{teal} CX: #1}
%     }{}
% }

% \newcommand{\cxc}[1]
% {
%     {%
%         {\color{teal} #1}
%     }{}
% }

\begin{abstract}%
% Large-scale distributed optimization algorithms are increasingly being used to accelerate machine learning model training.
% When scaling the distributed training, the communication overhead is often the bottleneck. In this paper, we study the local distributed Stochastic Gradient Descent~(SGD) algorithm, which reduces the communication overhead by decreasing the frequency of synchronization. While SGD with adaptive learning rates is a widely adopted strategy for training neural networks, it remains unknown how to implement infrequent synchronization in SGD with adaptive learning rates. To this end, we propose a novel SGD variant with reduced communication and adaptive learning rates. We prove the convergence of the proposed algorithm for smooth but non-convex problems. Empirical results show that the proposed algorithm significantly reduces the communication overhead, which, in turn, reduces the training time by up to 30\% for the 1B word dataset.
When scaling distributed training, the communication overhead is often the bottleneck. In this paper, we propose a novel SGD variant with reduced communication and adaptive learning rates. We prove the convergence of the proposed algorithm for smooth but non-convex problems. Empirical results show that the proposed algorithm significantly reduces the communication overhead, which, in turn, reduces the training time by up to 30\% for the 1B word dataset.
\end{abstract}

%\begin{keywords}%
%  List of keywords%
%\end{keywords}

\section{Introduction}

Stochastic Gradient Descent~(SGD) and its variants are commonly used for training deep neural networks. We can distribute the workload across multiple workers, which results in distributed SGD with data parallelism~\citep{Goyal2017AccurateLM,You2017ScalingSB,You2017ImageNetTI,You2019LargeBO}. 
A larger number of workers accelerates the training, but also increases the overall communication cost. In the worst case, it saturates the network interconnections.
In this paper, we reduce the communication overhead by skipping communication rounds, and periodically averaging the models across the workers.
Such an approach is called local SGD~\citep{Stich2018LocalSC,Lin2018DontUL,Yu2018ParallelRS,Wang2018CooperativeSA,Yu2019OnTL}. There are other approaches to reduce the communication overhead of distributed SGD, such as quantization~\citep{Seide20141bitSG,Strom2015ScalableDD,Wen2017TernGradTG,Alistarh2016QSGDCS,Bernstein2018signSGDCO,Karimireddy2019ErrorFF,Zheng2019CommunicationEfficientDB} and sparsification~\citep{Aji2017SparseCF,Stich2018SparsifiedSW,Jiang2018ALS,xie2020cser}. 

Adaptive learning rate methods adapt coordinate-wise dynamic learning rates by accumulating the historical gradients. Examples include  AdaGrad~\citep{McMahan2010AdaptiveBO,duchi2011adaptive}, RMSProp~\citep{tieleman2012lecture}, AdaDelta~\citep{Zeiler2012ADADELTAAA}, and Adam~\citep{Kingma2014AdamAM}. Along similar lines, recent research has shown that AdaGrad can converge without explicitly decreasing the learning rate~\citep{ward2019adagrad,zou2019sufficient}. We note that these methods were not designed for local SGD.
Nevertheless, in distributed SGD, it remains unclear how to use infrequent synchronization to reduce the communication overhead in SGD with adaptive learning rates. In this paper, we answer this question by introducing staleness to the updates of the adaptive learning rates. To be more specific, the update of the adaptive variables is delayed until the communication round.

%Based on the idea above, we 
We propose a novel SGD variant based on AdaGrad, and adopt the concept of local SGD to reduce the communication. 
To the best of our knowledge, this paper is the first to theoretically and empirically study local SGD with adaptive learning rates. 
The main contributions are as follows:

\setitemize[0]{leftmargin=*}
\begin{itemize}[topsep=2pt,itemsep=0pt,partopsep=0pt,parsep=0pt]
\item We propose {\it Local AdaAlter}, a new technique to lazily update the adaptive variables. This enables communication reduction via periodic synchronization for SGD with {adaptive learning rates}.
\item We prove the convergence of the proposed algorithm for non-convex problems.
\item We show empirically that Local AdaAlter significantly reduces the communication overhead, thus also cutting training time by up to 30\% for the 1B word. 
\end{itemize}

\section{Related work}

In this paper, we consider a centralized server-worker architecture, also known as the Parameter Server~(PS)~\citep{li2014scaling,li2014communication,ho2013more,Peng2019AGC}. 
% In general, PS is a distributed key-value store, which can be used for exchanging blocks of model parameters between the workers and the servers~\citep{Peng2019AGC}. 
A common alternative is the AllReduce algorithm~\citep{Sergeev2018HorovodFA,walker1996mpi}. Most of the existing deep-learning frameworks, such as Tensorflow~\citep{Abadi2016TensorFlowAS}, and PyTorch~\citep{Steiner2019PyTorchAI} support either of them.
% , and MXNet~\cite{Chen2015MXNetAF} support either PS or AllReduce.
%
Similar to local SGD, there are other SGD variants that also reduce the communication overhead by skipping synchronization rounds, such as federated learning~\citep{konevcny2016federated,mcmahan2016communication} and EASGD~\citep{Zhang2014DeepLW}.
%
% Additional to communication compression, there are other approaches to improve scalability and accelerate training. For example, decentralized SGD~\citep{Shi2014EXTRAAE,Yuan2013OnTC,Lian2017CanDA} avoids congesting the central server node and improves the scalability by removing the server, and letting the workers communicate with their neighbours only. Another technique is pipelining~\citep{Li2018PipeSGDAD}, which overlaps the computation and the communication to hide the communication overhead.
%
In this paper, we focus on synchronous training with homogeneous workers. In contrast, asynchronous training~\citep{Zinkevich2009SlowLA,Niu2011HOGWILDAL,zhao2016fast} is faster when there are stragglers, but noisier due to asynchrony~\citep{Dutta2018SlowAS}. 

\section{Problem formulation}

We consider the optimization problem: 
$
    \min_{x \in \R^d} F(x), 
$
where $F(x) = \frac{1}{n} \sum_{i \in [n]} \E_{z_i \sim \mathcal{D}_i} f(x; z_i)$, for $\forall i \in [n]$, $z_i$ is sampled from the local dataset $\mathcal{D}_i$ on the $i$th worker.
We solve this problem in a distributed manner with $n$ workers. 
% Each worker trains the model on its local dataset. 
In each iteration, the $i$th worker will sample a mini-batch of independent samples from the dataset $\mathcal{D}_i$, and compute the stochastic gradient $G_{i} = \nabla f(x; z_{i}), \forall i \in [n]$, where $z_{i} \sim \mathcal{D}_i$. 
Note that we assume non-IID workers, i.e., $\mathcal{D}_i \neq \mathcal{D}_j, \forall i \neq j$. 
% Thus, samples drawn from different workers have different expectations i.e. in general, $\E_{z_i \sim \mathcal{D}_i} \nabla f(x; z_i) \neq \E_{z_j \sim \mathcal{D}_j} \nabla f(x; z_j), \forall i \neq j$.

\begin{table}[htb]
\caption{Notations}
\vspace{-0.5cm}
\label{tbl:notations}
\begin{center}
\begingroup
\small
\renewcommand{\arraystretch}{0.5}
\setlength{\tabcolsep}{3pt}
\begin{tabular}{|l|l|}
\hline 
{\bf Notation}  & {\bf Description} \\ \hline
$x \in \R^d$    & Model parameter \\ \hline
$F(x)$, $F_i(x)$, $f_i(x)$     & $F(x) = \frac{1}{n} \sum_{i \in [n]} F_i(x)$; $F_i(x)=\E[ f(x; z_i) ], z_i \sim \mathcal{D}_i$; $\E[ f_i(x) ] = F_i(x)$ \\ \hline
% $\nabla F(x)$    & Gradient on $x$ \\ \hline
$T, t$    & Total number and index of iterations  \\ \hline
$G_t$, $(G_t)_j$, $(\nabla F_t)_j$    & Stochastic gradient $\E[G_t] = \nabla F(x_t)$, $(\cdot)_j$ is the $j$th coordinate, $j \in [d]$ \\ \hline
$(G_{i,t})_j$    & The $j$th coordinate of $G_{i,t}$, on the $i$th worker, $i \in [n]$, $j \in [d]$ \\ \hline
$\circ$ &   Hadamard~(coordinate-wise) product \\ \hline
$B^2_t$    & $B^2_t = b_0^2 \1 + \frac{1}{n} \sum_{i \in [n]} \sum_{s=1}^t G_{i,s} \circ G_{i,s}$, $B^2_0 = b_0^2 \1$ \\ \hline
$B_t$, $\frac{1}{B_t}$    & $\left[\sqrt{\left( B^2_t \right)_1}, \ldots, \sqrt{\left( B^2_t \right)_d} \right]^\top$, $\left[\frac{1}{\sqrt{\left( B^2_t \right)_1}}, \ldots, \frac{1}{\sqrt{\left( B^2_t \right)_d}} \right]^\top$ \\ \hline
$\frac{G_t}{B_t}$, $\frac{G_t}{\sqrt{B^2_t + \epsilon^2 \1}}$    & $G_t \circ \frac{1}{B_t} = \left[\frac{(G_t)_1}{\sqrt{\left( B^2_t \right)_1}}, \ldots, \frac{(G_t)_d}{\sqrt{\left( B^2_t \right)_d}} \right]^\top$, $\left[\frac{(G_t)_1}{\sqrt{\left( B^2_t \right)_1 + \epsilon^2}}, \ldots, \frac{(G_t)_d}{\sqrt{\left( B^2_t \right)_d + \epsilon^2}} \right]^\top$ \\ \hline
\end{tabular}
\endgroup
\vspace{-0.8cm}
\end{center}
\end{table}

\section{Methodology}

First, we introduce two SGD variants that are highly related to our work: AdaGrad and local SGD. Then, we will propose a new algorithm: local AdaAlter.

\subsection{Preliminary}

To help understand our proposed algorithm, we first introduce the classic SGD variant with adaptive learning rate: AdaGrad. The detailed algorithm is shown in Algorithm~\ref{alg:dist_adagrad}. The general idea is to accumulate the gradients coordinate-wise as the denominator to normalize the gradients.
% , so that the algorithm converges without explicitly decreasing the learning rate $\eta$.

We adopt the concept of local SGD to reduce the communication overhead. The vanilla local SGD algorithm is shown in Algorithm~\ref{alg:local_sgd}. Local SGD skips the communication rounds, and synchronizes/averages the model parameters for every $H$ iterations. Thus, on average, the communication overhead is reduced by the factor of $H$, compared to fully synchronous SGD.

\noindent
\begin{minipage}[t]{0.49\textwidth}\vspace{-0.3cm}
\begin{algorithm}[H]
\caption{Distributed AdaGrad}
Initialize $x_0$, $\epsilon^2$, $B^2_0 = \0$ \\
\For{iteration $t \in [T]$}{
    \For{workers $i \in [n]$ in parallel}{
        $G_{i, t} = \nabla f(x_{t-1}; z_{i, t})$, $z_{i, t} \sim \mathcal{D}_i$ \\
        $G_{t} = \frac{1}{n} \sum_{i \in [n]} G_{i, t}$ \\
        $B^2_{t} \leftarrow B^2_{t-1} +  G_{t} \circ G_{t}$ \\
    	$x_{t} \leftarrow x_{t-1} - \eta \frac{G_{t}}{\sqrt{B^2_{t} + \epsilon^2 \1}} $ 
    }
}
\label{alg:dist_adagrad}
\end{algorithm}
\end{minipage}
\hfill
\begin{minipage}[t]{0.49\textwidth}\vspace{-0.3cm}
\begin{algorithm}[H]
\caption{Local SGD}
Initialize $x_{1, 0} = \ldots = x_{n,0} = x_0$ \\
\For{iteration $t \in [T]$}{
    \For{workers $i \in [n]$ in parallel}{
        $G_{i, t} = \nabla f(x_{i, t-1}; z_{i, t})$, $z_{i, t} \sim \mathcal{D}_i$ \\
        $y_{i, t} \leftarrow x_{i, t-1} - \eta G_{i, t} $ \\
    	\lIf{$\mod(t, H) \neq 0$}{
    	    $x_{i, t} \leftarrow y_{i, t}$ 
    	}
    	\lElse{
    	    $x_{i, t} \leftarrow \frac{1}{n} \sum_{k \in [n]} y_{k, t}$
    	}
    }
}
\label{alg:local_sgd}
\end{algorithm}
\end{minipage}

\subsection{Local AdaAlter}

%Combining with local SGD, 
We propose an SGD variant based on AdaGrad, namely, local AdaAlter, which skips synchronization rounds, and periodically averages the model parameters \emph{and} the accumulated denominators after every $H$ iterations. The detailed algorithm is shown in Algorithm~\ref{alg:local_adaalter}. Note that in the communication rounds, AdaAlter has to synchronize not only the model parameters, but also the accumulated denominators across the workers. Thus, compared to the distributed AdaGrad~(Algorithm~\ref{alg:dist_adagrad}), local AdaAlter~(Algorithm~\ref{alg:local_adaalter}) reduces the communication overhead to $\frac{2}{H}$ on average.

\vspace{-0.2cm}
\begin{algorithm}[hbt!]
\caption{Local AdaAlter}
Initialize $x_{1, 0} = \ldots = x_{n,0} = x_0$, $B^2_{1,0} = \ldots = B^2_{n,0}  = b_0^2 \1$, $\epsilon^2$ \\
\For{iteration $t \in [T]$}{
    \For{workers $i \in [n]$ in parallel}{
        $t' = \mod(t-1, H) + 1$ \\
        $G_{i, t} = \nabla f(x_{i, t-1}; z_{i, t})$, $z_{i, t} \sim \mathcal{D}_i$ \\
        $y_{i, t} \leftarrow x_{i, t-1} - \eta \frac{G_{i, t}}{\sqrt{B^2_{i, t-t'} + t' \epsilon^2 \1}} $; 
    	$A^2_{i, t} \leftarrow B^2_{i, t-1} + G_{i, t} \circ G_{i, t}$ \\
    	\lIf{$\mod(t, H) \neq 0$}{
    	    $x_{i, t} \leftarrow y_{i, t}$; 
    	    $B^2_{i, t} \leftarrow A^2_{i, t}$ 
    	}
    	\lElse{
    	    Synchronize: $x_{i, t} \leftarrow \frac{1}{n} \sum_{k \in [n]} y_{k, t}$; 
    	    $B^2_{i, t} \leftarrow \frac{1}{n} \sum_{k \in [n]} A^2_{k, t}$ 
    	} 
    }
}
\label{alg:local_adaalter}
\end{algorithm}
\vspace{-0.2cm}

\textbf{Lazy update of the denominators:} In AdaGrad, a small positive constant $\epsilon$ is added for the numerical stability, in case that the denominator $B_t^2$ is too small. However, in AdaAlter, $\epsilon^2$ acts as a placeholder for the yet-to-be-added $G_{i, t} \circ G_{i, t}$. Thus, after $t'$ local steps without synchronization, such placeholder becomes $t'\epsilon^2$. The denominators $B^2_{i,t}$ are updated in the synchronization rounds only, which guarantees that the denominators are the same on different workers in the local iterations. 
In a nutshell, in AdaAlter, the denominators $B^2_{i,t}$ are lazily updated to enable the infrequent synchronization. 
The key idea is to use $B^2_{i, t-t'} + t' \epsilon^2 \1$ as a placeholder before synchronization. 
Note that even if we take $H=1$, AdaAlter is different from AdaGrad due to the lazy update.
% This is also the key for the convergence proof of local AdaAlter.

The lazy update of the denominators keeps the adaptive learning rates synchronized across different workers, thus mitigates the noise caused by the local steps. However, it also incurs staleness in the adaptivity. In our experiments, we show that such small staleness does not affect the accuracy.

% Similar to the fully synchronous AdaAlter, local AdaAlter updates the denominator after updating the model parameters. Switching the order of updates is essential for local AdaAlter, in order to enable lazy updates of the denominator, while keeping the denominator synchronized on different workers.
%even in the local steps without synchronization. 

\section{Theoretical analysis}

In this section, we prove the convergence of Algorithm~\ref{alg:local_adaalter} for smooth but non-convex problems, with constant learning rate $\eta$. First, we introduce some assumptions for our convergence analysis.
\begin{assumption} (Smoothness)
We assume that $F(x)$ and $F_i(x), \forall i \in [n]$ are $L$-smooth:
$
F_i(y) - F_i(x) 
\leq \ip{\nabla F_i(x)}{y-x} + \frac{L}{2} \|y-x\|^2, \forall x, y.
$
% and coordinate-wise smoothness:
% \begin{align*}
% |\left( \nabla F(x) \right)_j - \left( \nabla F(y) \right)_j| \leq L |(x)_j - (y)_j|, \forall j \in [d], \forall x, y.
% \end{align*}
\label{asm:smoothness}
\end{assumption}
\begin{assumption} (Bounded gradients)
For any stochastic gradient $G_{i, t} = \nabla f_i(x_t)$, we assume bounded coordinates $(G_{i, t})^2_j \leq \rho^2, \forall j \in [d]$, or simply $\|G_{i, t}\|_{\infty} \leq \rho$.
\label{asm:gradient}
\end{assumption}

% Based on the assumptions above, we have the following convergence guarantees. The detailed proof can be found in the appendix. 

% We first prove the convergence of fully synchronous AdaAlter for smooth but non-convex problems. 

% \begin{theorem}~(Convergence of AdaAlter~(Algorithm~\ref{alg:dist_adaalter})) \label{thm:dist_adaalter}
% Taking arbitrary $\epsilon > 0$, $\eta \leq \frac{1}{L}$ in Algorithm~\ref{alg:dist_adaalter}, and $b_0 \geq 1$. Under Assumption~\ref{asm:smoothness} and \ref{asm:gradient}, Algorithm~\ref{alg:dist_adaalter} converges to a critical point:
% \begin{align*}
% & \frac{\sum_{t=1}^T \| \nabla F(x_{t-1}) \|^2}{T} \\
% &\leq \frac{ 2(b_0 + \sqrt T \frac{1}{p}\epsilon) \E\left[ F(x_{0}) - F(x_T) \right]}{\eta T} \\
% &\quad + \frac{dL \eta (b_0 + \sqrt T \frac{1}{p}\epsilon)  \log\left( b_0^2 + T \rho^2 \right)  }{n p^2 T} \\
% &\leq \mathcal{O}\left( \frac{1}{\sqrt{T}} \right) + \mathcal{O}\left( \frac{\log (T)}{n \sqrt{T}} \right), 
% \end{align*}
% where $p = \min(\frac{\epsilon}{\rho}, 1)$.
% \end{theorem}

% When $T \rightarrow +\infty$, $\left[ \mathcal{O}\left( \frac{1}{\sqrt{T}} \right) + \mathcal{O}\left( \frac{\log (T)}{n \sqrt{T}} \right) \right] \rightarrow 0$. When $n \rightarrow +\infty$, $\mathcal{O}\left( \frac{\log (T)}{n \sqrt{T}} \right)\rightarrow 0$. Thus, AdaAlter converges to a critical point when $T \rightarrow +\infty$. Increasing the number of workers $n$ reduces the variance.

% We prove the convergence of local AdaAlter for smooth but non-convex problems. 
To analyze Algorithm~\ref{alg:local_adaalter}, we introduce the following auxiliary variable:
$
    \bar{x}_{t} = \frac{1}{n} \sum_{i \in [n]} x_{i, t}.
$
We show that the sequence $\bar{x}_{0}, \ldots, \bar{x}_{T}$ converges to a critical point.
The detailed proof is in Appendix A. 
\begin{theorem}~(Convergence of local AdaAlter~(Algorithm~\ref{alg:local_adaalter})) \label{thm:local_adaalter}
Taking arbitrary $\epsilon > 0$, $\eta \leq \frac{1}{L}$ in Algorithm~\ref{alg:local_adaalter}, and $b_0 \geq 1$, under Assumption~\ref{asm:smoothness} and \ref{asm:gradient}, Algorithm~\ref{alg:local_adaalter} converges to a critical point:
$
\frac{\E \left[ \sum_{t=1}^T \| \nabla F(\bar{x}_{t-1}) \|^2 \right]}{T} 
% \leq \frac{2 \sqrt{b^2_0 + \frac{T\epsilon^2}{p^2}} \E\left[ F(\bar{x}_{0}) - F(\bar{x}_{T}) \right]}{\eta T} 
% \quad + \left[ 4 \eta^2 L^2 H^2 + \frac{1}{n} L \eta  \right] \frac{d \log\left( b_0^2 + T \rho^2 \right)\sqrt{b^2_0 + \frac{T\epsilon^2}{p^2}}}{T p^2} 
\leq \mathcal{O}\left( \frac{1}{\eta \sqrt{T}} \right) + \mathcal{O}\left( \frac{\eta^2 H^2\log (T)}{\sqrt{T}} \right) + \mathcal{O}\left( \frac{\eta \log (T)}{n \sqrt{T}} \right).
$
\end{theorem}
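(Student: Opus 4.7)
The plan is to analyze the virtual averaged sequence $\bar{x}_t = \frac{1}{n}\sum_{i\in[n]} x_{i,t}$ and later transfer convergence back to the original iterates. The crucial observation that unlocks the analysis is that the \emph{lazy} update of the denominator in Algorithm~\ref{alg:local_adaalter} keeps the quantity $D_t := \sqrt{B^2_{t-t'} + t'\epsilon^2 \1}$ (i) identical across all workers throughout a local phase, and (ii) measurable with respect to the information available at the last synchronization time $t-t'$, so in particular independent of the fresh stochastic gradients $G_{i,t}$ given $\mathcal{F}_{t-1}$. This yields the clean update $\bar{x}_t = \bar{x}_{t-1} - \eta\, \bar{G}_t / D_t$ (coordinate-wise division), where $\bar{G}_t = \frac{1}{n}\sum_i G_{i,t}$, and places us in essentially the same setting as AdaGrad but with a delayed denominator.

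\textbf{Descent and drift decomposition.} Applying $L$-smoothness on $\bar{x}_{t-1}\to\bar{x}_t$ and taking conditional expectation, the linear term becomes $-\eta \ip{\nabla F(\bar{x}_{t-1})}{\frac{1}{D_t}\circ \frac{1}{n}\sum_i \nabla F_i(x_{i,t-1})}$. I would decompose this by adding and subtracting $\nabla F(\bar{x}_{t-1})/D_t$: the ``ideal'' piece $-\eta \ip{\nabla F(\bar{x}_{t-1})}{\nabla F(\bar{x}_{t-1})/D_t}$ is the main negative descent, and the residual is a drift term controlled via $L$-smoothness of each $F_i$ by $L\|x_{i,t-1}-\bar{x}_{t-1}\|$. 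Because every worker starts a local phase from the same synchronized point and each local step shifts the iterate by at most $\eta \rho / b_0 \leq \eta\rho$ per coordinate (using $\|G_{i,t}\|_\infty \leq \rho$ and the lower bound $D_t \geq b_0 \geq 1$), the drift is $\mathcal{O}(\eta H \rho \sqrt{d})$; squaring produces the $\eta^2 H^2$ contribution to the final bound.

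\textbf{AdaGrad log bound for the quadratic term.} For the quadratic $\frac{L\eta^2}{2}\|\bar{G}_t/D_t\|^2$, I would invoke the classical coordinate-wise telescoping identity $\sum_t g_t^2/(g_0^2 + \sum_{s\leq t} g_s^2) \leq 1 + \log\bigl(1 + \sum_t g_t^2/g_0^2\bigr)$, which accounts for the $\log T$ factor. Because $D_t$ is stale while the telescoping needs the fresh accumulated denominator $B_t$, I would split $\|\bar{G}_t/D_t\|^2 = \|\bar{G}_t/B_t\|^2 + (\|\bar{G}_t/D_t\|^2 - \|\bar{G}_t/B_t\|^2)$; since $|B_t^2 - D_t^2|$ is at most $\mathcal{O}(H\rho^2)$ per coordinate, the residual is controllable and the log bound carries over. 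The $1/n$ factor in the last term of the theorem arises by exploiting independence across workers to write $\E\|\bar{G}_t\|^2 \leq \|\nabla F(\bar{x}_{t-1})\|^2 + \mathcal{O}(\rho^2/n)$-type decompositions and carrying the variance reduction through the telescoping.

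\textbf{Combining and main obstacle.} Summing the descent lemma from $t=1$ to $T$, using $F(\bar{x}_T)\geq \inf F$ to telescope the left-hand side, and lower-bounding $1/(D_t)_j$ by $1/\sqrt{b_0^2 + (t-1)\rho^2}$ to extract the $1/\sqrt{T}$ rate from $\sum_{t=1}^{T} 1/\sqrt{t}$, a final rearrangement gives the stated bound. The hardest part, in my view, is the coupled bookkeeping between (a) staleness of $D_t$ relative to fresh $B_t$ in the AdaGrad log term and (b) the drift $\|x_{i,t-1}-\bar{x}_{t-1}\|$ across non-IID workers: both errors feed into each other through the shared denominator, and care is needed to ensure that neither absorbs the main $\mathcal{O}(1/(\eta\sqrt{T}))$ term and that the $1/n$ variance-reduction factor survives averaging.
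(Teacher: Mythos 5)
Your overall architecture matches the paper's: analyze the averaged iterate $\bar{x}_t = \frac{1}{n}\sum_i x_{i,t}$, exploit the key fact that the lazy denominator $D_t=\sqrt{B^2_{t-t'}+t'\epsilon^2\1}$ is shared across workers and measurable at the last synchronization time, apply $L$-smoothness, bound the intra-phase worker drift by $\mathcal{O}(\eta H\rho)$, and close with the coordinate-wise telescoping lemma $\sum_t a_t/(a_0+\sum_{s\le t}a_s)\le\log(\cdot)$. That skeleton is correct and is the one the paper follows.

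Where your route diverges, and where as written it does not close, is the treatment of the linear (inner-product) term. The paper applies the polarization identity $-2\langle a,b\rangle=-\|a\|^2-\|b\|^2+\|a-b\|^2$ coordinate-wise with $a=\nabla F(\bar{x}_{t-1})$ and $b=\frac{1}{n}\sum_i\nabla F_i(x_{i,t-1})$. Crucially, this produces not only the main descent $-\tfrac{\eta}{2}\sum_j(\nabla F(\bar{x}_{t-1}))_j^2/(D_t)_j$ and a drift term $+\tfrac{\eta}{2}\sum_j(a-b)_j^2/(D_t)_j$, but also a \emph{second negative} piece $-\tfrac{\eta}{2}\sum_j\big(\frac{1}{n}\sum_i\nabla F_i(x_{i,t-1})\big)_j^2/(D_t)_j$ (term \textcircled{4} in the paper). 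That second negative piece exactly cancels the squared-mean part of the quadratic term $\tfrac{L\eta^2}{2}\sum_j\big(\frac{1}{n}\sum_i\nabla F_i(x_{i,t-1})\big)_j^2/(D_t)_j^2$ (term \textcircled{7}) using only $D_t\ge b_0\ge 1$ and $\eta\le 1/L$, leaving the $1/n$ variance piece as the sole contribution of the quadratic. Your add-and-subtract decomposition yields only $-\eta\|\nabla F/\sqrt{D_t}\|^2$ plus a cross term $-\eta\langle\nabla F, (\,\mathrm{mean}-\nabla F)/D_t\rangle$. Converting the cross term to ``squared drift'' requires Young's inequality, which returns a fraction of the descent; and since you have no $-\|b/\sqrt{D_t}\|^2$ term, the quadratic's squared-mean piece must be absorbed through $\|b\|^2\le 2\|\nabla F\|^2+2\|\mathrm{drift}\|^2$, which returns the rest. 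With the natural factor-$2$ choices and $\eta=1/L$, the net coefficient on $\|\nabla F/\sqrt{D_t}\|^2$ becomes nonnegative and the descent argument stalls; it can be repaired only by tuning the Young's parameters to leave slack, which worsens constants and effectively demands a smaller step size than the stated $\eta\le 1/L$. The completing-the-square step is the key ingredient your proposal is missing.

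A secondary, genuine difference is how the stale denominator $D_t$ is reconciled with the fresh accumulator $B_t$ before telescoping. You propose to bound the coordinate-wise gap $|B_t^2-D_t^2|=\mathcal{O}(H\max(\rho^2,\epsilon^2))$ and control the residual; the paper instead introduces $p=\min(\epsilon/\rho,1)$ and uses $\epsilon^2\ge p^2\rho^2$ together with $(G_{i,s})_j^2\le\rho^2$ to get $(\bar{B}_{t-t'})_j^2+t'\epsilon^2\ge p^2(\bar{B}_t)_j^2$, then feeds the log lemma directly. Both are plausible, but the $p$-trick avoids an extra residual sum and is what lets the paper fold everything into a single application of the telescoping lemma. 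Finally, a small inaccuracy: you write $\E\|\bar{G}_t\|^2\le\|\nabla F(\bar{x}_{t-1})\|^2+\mathcal{O}(\rho^2/n)$, but $\E[\bar{G}_t]=\frac{1}{n}\sum_i\nabla F_i(x_{i,t-1})$, not $\nabla F(\bar{x}_{t-1})$; the discrepancy is exactly the drift term and must be tracked separately, as the paper does.
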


% When $T \rightarrow +\infty$, $\mathcal{O}\left( \frac{1}{\eta \sqrt{T}} \right) + \mathcal{O}\left( \frac{\eta^2 H^2\log (T)}{\sqrt{T}} \right) + \mathcal{O}\left( \frac{\eta \log (T)}{n \sqrt{T}} \right) \rightarrow 0$. When $n \rightarrow +\infty$, $\mathcal{O}\left( \frac{\log (T)}{n \sqrt{T}} \right)\rightarrow 0$. 
With the constant learning rate $\eta$, local AdaAlter converges to a critical point when $T \rightarrow +\infty$. Increasing the number of workers $n$ reduces the variance. Compared to the fully synchronous AdaAlter, local AdaAlter has the extra noise proportional to $H^2$ due to the reduced communication.
% , which is an inevitable trade-off between the noise and the communication overhead.

\section{Experiments}

In this section, we empirically evaluate the proposed algorithm. 
% The source code and instructions of reproducing the experiments can be found at this \underline{\href{https://anonymous.4open.science/r/61551f07-896e-4ad6-97cd-e6458763ad0d/}{anonymous link}}.

\subsection{Multi-GPU experiment on 1B Word}
AdaGrad is mostly successful on language models.
Thus, we conduct experiments on the 1B Word benchmark dataset~\citep{Chelba2013OneBW}.
% which is a publicly available benchmark for language models, composed of about 0.8B words with a vocabulary of 793471 words. 
% As a standard pre-processing procedure, all the sentences are shuffled and the duplicates are removed. 
We train Big LSTM model with 10\% dropout~(LSTM-2048-512~\cite{Jzefowicz2016ExploringTL}).

\subsubsection{Evaluation setup}

Our experiments are conducted on a single machine with 8 GPUs~(an AWS P3.16 instance with 8 NVIDIA V100 GPUs, with 16GB memory per GPU). The batch size is 256 per GPU. We tune the learning rates in the range of $[0.2, 0.8]$ on the training data, and report the best results. Each experiment is composed of 50 epochs. Each epoch processes $20k \times 8 \times 256$ data samples. We repeat each experiment 5 times and take the average.
In all the experiments, we take $\epsilon=1$, $b_0 = 1$. 

% , which is the average
% per-word log-probability on the test dataset: 
% $
%     \exp\left( \frac{1}{N} \sum_{i \in [N]} \ln p_{w_i} \right),
% $
% where $p_{w_i}$ is the predicted probability of word $w_i$ in the language model. 
% We follow the standard procedure, and sum over all the words.

The typical measure used for language models is perplexity~(PPL).
We evaluate the following performance metrics to test the reduction of communication overhead and the convergence:
i) the time consumed by one epoch versus different number of GPU workers;
ii) the perplexity on the test dataset versus time;
iii) the perplexity on the test dataset versus the number of epochs.

\subsection{Practical remarks for AdaAlter on 1B Word}

There are some additional remarks for using local AdaAlter in practice.

\noindent\textbf{Warm-up Learning Rates:}
When using AdaAlter, we observe that it behaves almost the same as AdaGrad, except at the beginning, the denominator $B_t^2$ is too small for AdaAlter. Thus, we add a warm-up mechanism for AdaAlter:
$
    \eta_t \leftarrow \eta \times \min\left( 1, \frac{t}{warm\_up\_steps} \right),
$
where $warm\_up\_steps$ is a hyperparameter. In the first $warm\_up\_steps$ iterations, the learning rate will gradually increase from $\frac{\eta}{warm\_up\_steps}$ to $\eta$. In our default setting where we use 8 GPU workers with batch size $256$, we take $\eta = 0.5$ and $warm\_up\_steps = 600$.

\noindent\textbf{Scaling Learning Rates:}
The original baseline is conducted on 4 GPU with batch size $128$ per GPU, and learning rate $0.2$. When the batch size increases by $k$, it is a common strategy to re-scale the learning rate by $k$ or $\sqrt{k}$~\citep{Goyal2017AccurateLM,You2017ScalingSB,You2017ImageNetTI,You2019LargeBO}. In our experiments, we use 8 GPU with batch size $256$ per GPU. Thus, we tuned $\eta$ in the range of $[0.4, 0.8]$, and found that $\eta = 0.5$ results in the best performance.
% When tuning the learning rates, we found that taking $\eta = 0.5$ results in the best performance.

\subsubsection{Evaluation Rsults}
Figure~\ref{fig:lstm_time_throughput} illustrates the time consumed by one epoch and the throughput with different numbers of workers and different algorithms. We vary the synchronization periods $H$ for local AdaAlter. It is shown that local AdaAlter efficiently reduces the communication overhead.

\noindent
\begin{minipage}[t]{0.495\textwidth}\vspace{-0.3cm}
\begin{figure}[H]
\centering
\includegraphics[width=\textwidth,height=3.1cm]{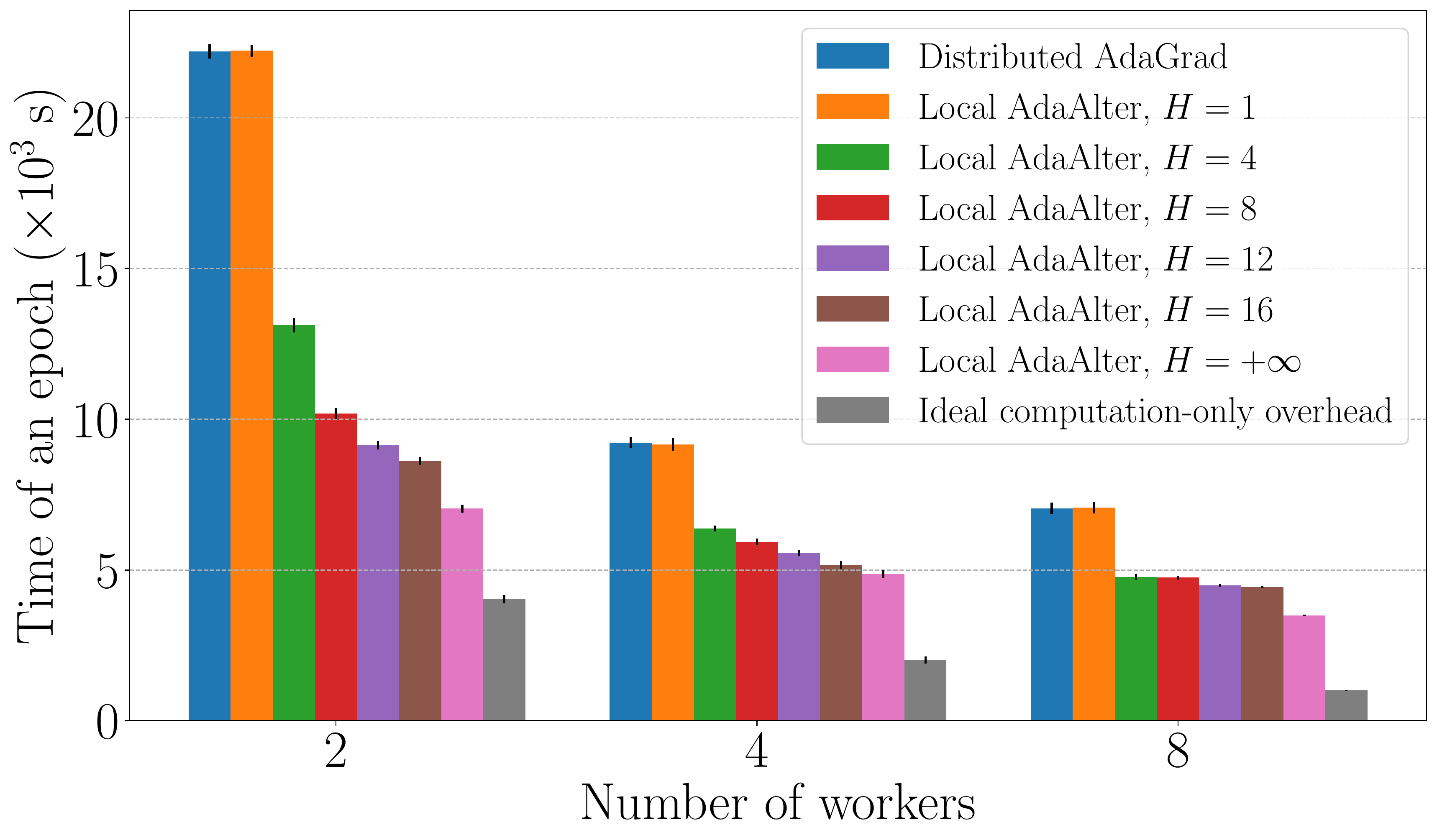}
\vspace{-0.7cm}
\caption{Time consumed by one epoch versus different numbers of workers.
% for LSTM-2048-512 on 1B word dataset. 
% The local batch size on each GPU worker is $256$. 
% Each epoch processes $20,000 \times 8 \times 256$ data samples.
}
\label{fig:lstm_time_throughput}
\end{figure}
\end{minipage}
\hfill
\begin{minipage}[t]{0.495\textwidth}\vspace{-0.3cm}
\begin{table}[H]
\caption{Test PPL and time at the end of training, for LSTM-2048-512 on 1B word dataset.}
\vspace{-0.5cm}
\label{tbl:convergence}
\begin{center}
\renewcommand{\arraystretch}{0.5}
\setlength{\tabcolsep}{3pt}
\begin{tabular}{|l|l|r|}
\hline 
{\bf Method}  & {\bf Test PPL} & {\bf Time (hours)} \\ \hline
AdaGrad & $44.58 \pm 0.02$ & 98.05 \\ \hline
Local AdaAlter & & \\ \hline
$H=1$ & $44.36 \pm 0.01$ & 98.47 \\ \hline
$H=4$ & $44.08 \pm 0.05$ & 69.17 \\ \hline
$H=8$ & $44.26 \pm 0.10$ & 67.41 \\ \hline
$H=12$ & $44.30 \pm 0.11$ & 65.49 \\ \hline
$H=16$ & $44.51 \pm 0.08$ & 64.22 \\ \hline
\end{tabular}
\end{center}
\end{table}
\end{minipage}

\begin{figure*}[htb!]
\centering
\subfigure[Test perplexity versus training time]{\includegraphics[width=0.495\textwidth,height=4cm]{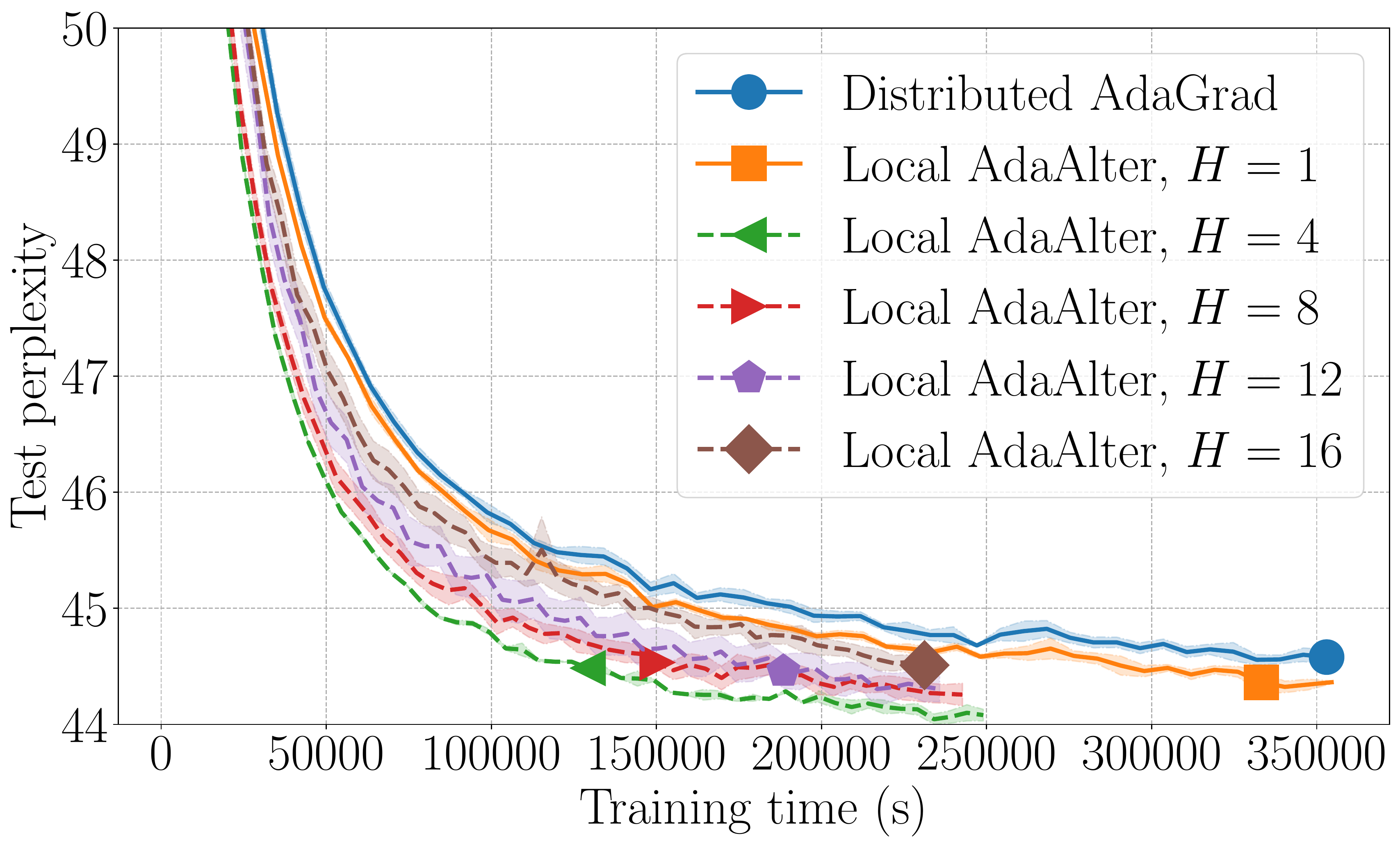}}
\subfigure[Test perplexity versus epochs]{\includegraphics[width=0.495\textwidth,height=4cm]{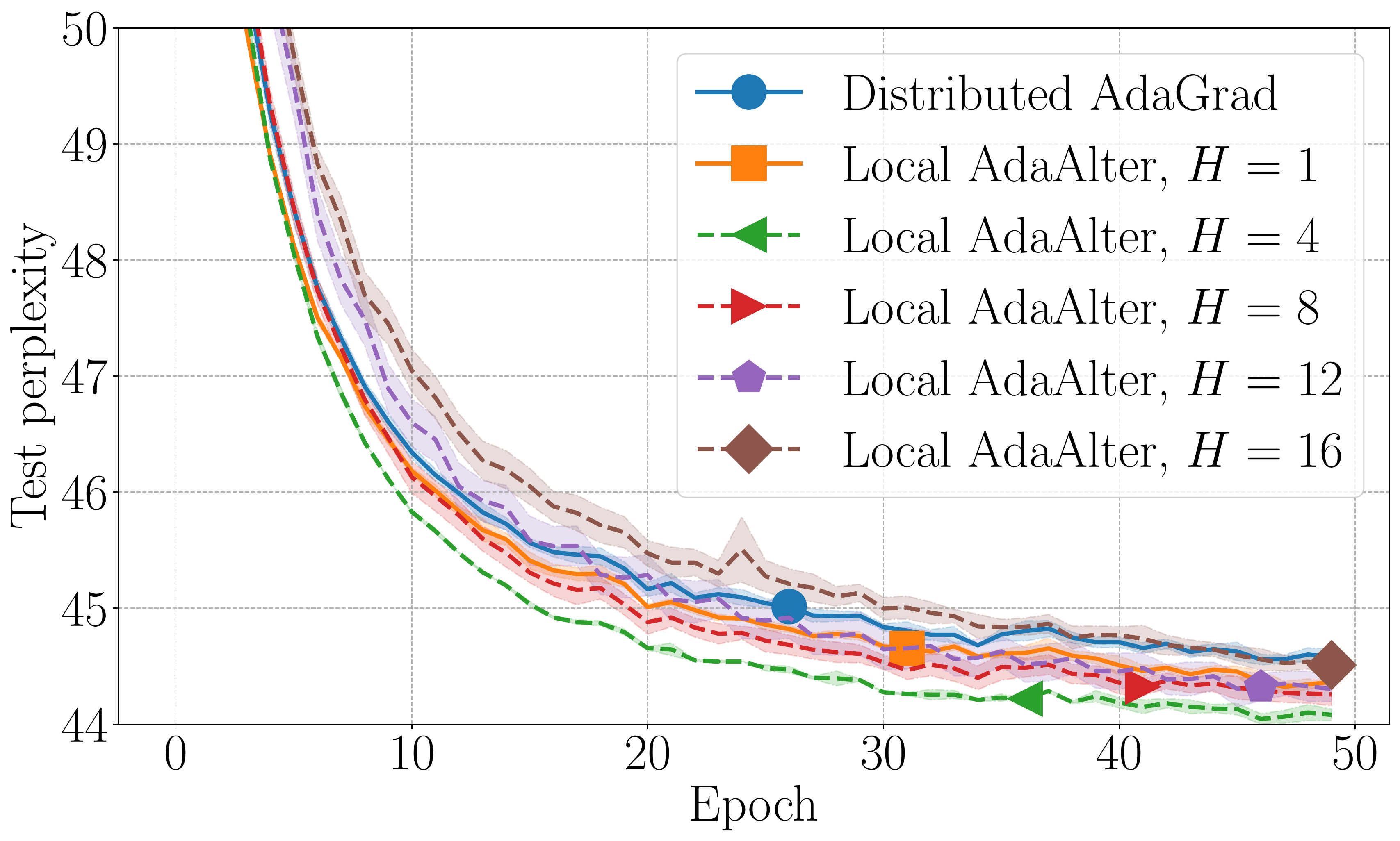}}
\subfigure[Training loss versus training time]{\includegraphics[width=0.495\textwidth,height=3.6cm]{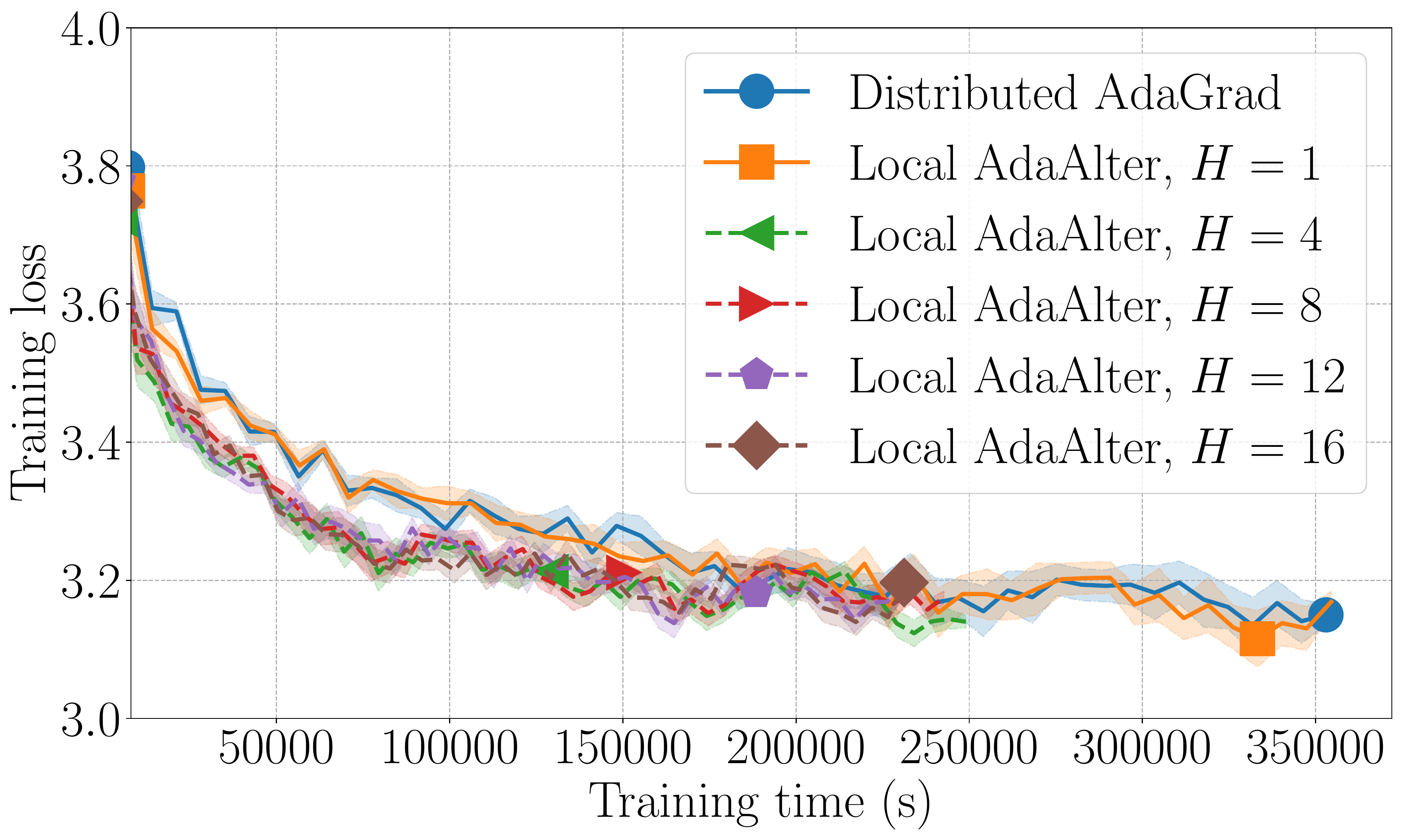}}
\subfigure[Training loss versus epochs]{\includegraphics[width=0.495\textwidth,height=3.6cm]{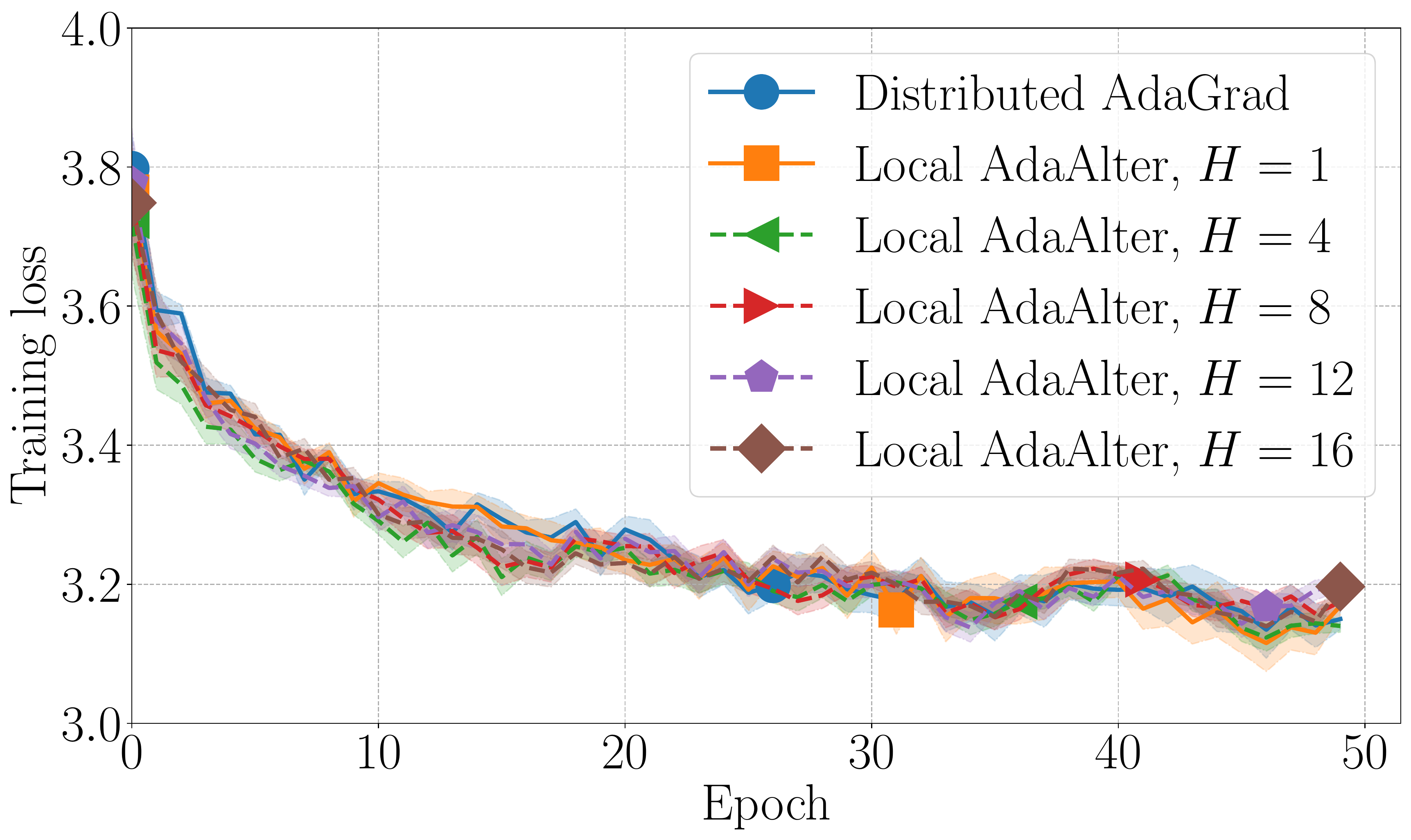}}
\caption{Evaluation of different algorithms, for LSTM-2048-512 on 1B word dataset. We use a single machine with 8 GPU workers and local batch size $256$ on each GPU.  For all experiments, we take the learning rate $\eta=0.5$. For local AdaAlter, we take $warm\_up\_steps = 600$ warm-up steps.}
\label{fig:convergence_enlarge}
\end{figure*}

The overall overhead can be decomposed into three parts: computation, communication, and data loading.
The baseline ``Local AdaAlter, $H=+\infty$'' is evaluated by manually removing the communication, and measures the ideal overhead without communication. The baseline ``Ideal computation-only overhead'' is evaluated by manually removing both the communication and the data-loading. These two baselines illustrate the ideal lower bounds of the training time.

Figure~\ref{fig:convergence_enlarge} illustrates the perplexity on the test dataset and the loss on the training dataset with different algorithms. Compared to vanilla distributed AdaGrad, local AdaAlter enjoys equivalent convergence rate, but takes much less time. To reach the same perplexity, local AdaAlter can reduce almost 30\% of the training time.

In Table~\ref{tbl:convergence}, we report the perplexity and consumed time at the end of training for different algorithms. We can see that local AdaAlter produces comparable performance to the fully synchronous AdaGrad and AdaAlter, on the test dataset, with much less training time, and acceptable variance. 
% Note that we do not illustrate the standard deviation in Figure~\ref{fig:convergence}, since it is too small to recognize.

\subsection{Discussion}

We can see that the fully synchronous AdaGrad or AdaAlter~($H=1$) are very slow. Local AdaAlter reduces almost 30\% of the training time compared to the fully synchronous AdaGrad or AdaAlter.

As we expected, Figure~\ref{fig:convergence_enlarge} and Table~\ref{tbl:convergence} show that larger $H$ reduces more communication overhead, but also results in worse perplexity, which validates our theoretical analysis in Theorem~\ref{thm:local_adaalter}: when $H$ increases, the noise in the convergence also increases. Taking $H=4$ gives the best trade-off between the communication overhead and the test perplexity. 

Interestingly, as shown in Figure~\ref{fig:convergence_enlarge}(b), local AdaAlter with $H > 1$ has slightly better perplexity on the test dataset, compared to the fully synchronous AdaGrad and AdaAlter with $H=1$. Although, our theoretical analysis indicates that local AdaAlter has larger variance compared to the fully synchronous version, such conclusion only applies to the training loss. In fact, there is previous work~\citep{Lin2018DontUL} showing that local SGD potentially generalizes better than the fully synchronous SGD.
% , which makes our results not surprising. 
We also notice that when $H$ is too large, such benefit will be overwhelmed by the large noise.

We also observe that almost all the algorithms do not scale well when scaling from 4 to 8 workers. The major reason is that all the workers are placed in the same machine, where  CPU resources are limited. When there are too many workers, the data-loading becomes a bottleneck.
% , which is shown in the gap between ``Local AdaAlter, $H=+\infty$'' and ``Ideal computation-only overhead'' in Figure~\ref{fig:lstm_time_throughput}. 
That is also the reason why different $H$ does not show much difference when using 8 GPU workers.

\section{Conclusion}

We propose a novel SGD algorithm: local AdaAlter, which reduces the communication overhead by skipping the synchronization rounds, and adopts adaptive learning rates. We show that the algorithm provably converges. Our empirical results also show accelerated training compared to baselines. 
% In future work, we will apply our algorithms to other datasets and applications, and optimize the performance systemically.

\acks{This work was funded in part by the following grants: NSF IIS 1909577, NSF CNS 1908888, NSF CCF 1934986 and a JP Morgan Chase Fellowship, along with computational resources donated by Intel, AWS, and Microsoft Azure.}

\bibliography{adaalter_opt2020}

\begin{thebibliography}{44}
\providecommand{\natexlab}[1]{#1}
\providecommand{\url}[1]{\texttt{#1}}
\expandafter\ifx\csname urlstyle\endcsname\relax
  \providecommand{\doi}[1]{doi: #1}\else
  \providecommand{\doi}{doi: \begingroup \urlstyle{rm}\Url}\fi

\bibitem[Abadi et~al.(2016)Abadi, Barham, Chen, Chen, Davis, Dean, Devin,
  Ghemawat, Irving, Isard, Kudlur, Levenberg, Monga, Moore, Murray, Steiner,
  Tucker, Vasudevan, Warden, Wicke, Yu, and Zhang]{Abadi2016TensorFlowAS}
Mart{\'i}n Abadi, Paul Barham, Jianmin Chen, Zhifeng Chen, Andy Davis, Jeffrey
  Dean, Matthieu Devin, Sanjay Ghemawat, Geoffrey Irving, Michael Isard,
  Manjunath Kudlur, Josh Levenberg, Rajat Monga, Sherry Moore, Derek~Gordon
  Murray, Benoit Steiner, Paul~A. Tucker, Vijay Vasudevan, Pete Warden, Martin
  Wicke, Yuan Yu, and Xiaoqiang Zhang.
\newblock Tensorflow: A system for large-scale machine learning.
\newblock In \emph{OSDI}, 2016.

\bibitem[Aji and Heafield(2017)]{Aji2017SparseCF}
Alham~Fikri Aji and Kenneth Heafield.
\newblock Sparse communication for distributed gradient descent.
\newblock In \emph{EMNLP}, 2017.

\bibitem[Alistarh et~al.(2016)Alistarh, Grubic, Li, Tomioka, and
  Vojnovic]{Alistarh2016QSGDCS}
Dan Alistarh, Demjan Grubic, Jerry Li, Ryota Tomioka, and Milan Vojnovic.
\newblock Qsgd: Communication-efficient sgd via gradient quantization and
  encoding.
\newblock In \emph{NIPS}, 2016.

\bibitem[Bernstein et~al.(2018)Bernstein, Wang, Azizzadenesheli, and
  Anandkumar]{Bernstein2018signSGDCO}
Jeremy Bernstein, Yu-Xiang Wang, Kamyar Azizzadenesheli, and Anima Anandkumar.
\newblock signsgd: compressed optimisation for non-convex problems.
\newblock In \emph{ICML}, 2018.

\bibitem[Chelba et~al.(2013)Chelba, Mikolov, Schuster, Ge, Brants, and
  Koehn]{Chelba2013OneBW}
Ciprian Chelba, Tomas Mikolov, Mike Schuster, Qi~Ge, Thorsten Brants, and
  Phillipp Koehn.
\newblock One billion word benchmark for measuring progress in statistical
  language modeling.
\newblock In \emph{INTERSPEECH}, 2013.

\bibitem[Duchi et~al.(2011)Duchi, Hazan, and Singer]{duchi2011adaptive}
John Duchi, Elad Hazan, and Yoram Singer.
\newblock Adaptive subgradient methods for online learning and stochastic
  optimization.
\newblock \emph{Journal of Machine Learning Research}, 12\penalty0
  (Jul):\penalty0 2121--2159, 2011.

\bibitem[Dutta et~al.(2018)Dutta, Joshi, Ghosh, Dube, and
  Nagpurkar]{Dutta2018SlowAS}
Sanghamitra Dutta, Gauri Joshi, Soumyadip Ghosh, Parijat Dube, and Priya
  Nagpurkar.
\newblock Slow and stale gradients can win the race: Error-runtime trade-offs
  in distributed sgd.
\newblock In \emph{AISTATS}, 2018.

\bibitem[Goyal et~al.(2017)Goyal, Doll{\'a}r, Girshick, Noordhuis, Wesolowski,
  Kyrola, Tulloch, Jia, and He]{Goyal2017AccurateLM}
Priya Goyal, Piotr Doll{\'a}r, Ross~B. Girshick, Pieter Noordhuis, Lukasz
  Wesolowski, Aapo Kyrola, Andrew Tulloch, Yangqing Jia, and Kaiming He.
\newblock Accurate, large minibatch sgd: Training imagenet in 1 hour.
\newblock \emph{ArXiv}, abs/1706.02677, 2017.

\bibitem[Ho et~al.(2013)Ho, Cipar, Cui, Lee, Kim, Gibbons, Gibson, Ganger, and
  Xing]{ho2013more}
Qirong Ho, James Cipar, Henggang Cui, Seunghak Lee, Jin~Kyu Kim, Phillip~B
  Gibbons, Garth~A Gibson, Greg Ganger, and Eric~P Xing.
\newblock More effective distributed ml via a stale synchronous parallel
  parameter server.
\newblock In \emph{Advances in neural information processing systems}, pages
  1223--1231, 2013.

\bibitem[Jiang and Agrawal(2018)]{Jiang2018ALS}
Peng Jiang and Gagan Agrawal.
\newblock A linear speedup analysis of distributed deep learning with sparse
  and quantized communication.
\newblock In \emph{NeurIPS}, 2018.

\bibitem[J{\'o}zefowicz et~al.(2016)J{\'o}zefowicz, Vinyals, Schuster, Shazeer,
  and Wu]{Jzefowicz2016ExploringTL}
Rafal J{\'o}zefowicz, Oriol Vinyals, Mike Schuster, Noam Shazeer, and Yonghui
  Wu.
\newblock Exploring the limits of language modeling.
\newblock \emph{ArXiv}, abs/1602.02410, 2016.

\bibitem[Karimireddy et~al.(2019)Karimireddy, Rebjock, Stich, and
  Jaggi]{Karimireddy2019ErrorFF}
Sai~Praneeth Karimireddy, Quentin Rebjock, Sebastian~U. Stich, and Martin
  Jaggi.
\newblock Error feedback fixes signsgd and other gradient compression schemes.
\newblock In \emph{ICML}, 2019.

\bibitem[Kingma and Ba(2014)]{Kingma2014AdamAM}
Diederik~P. Kingma and Jimmy Ba.
\newblock Adam: A method for stochastic optimization.
\newblock \emph{CoRR}, abs/1412.6980, 2014.

\bibitem[Konevcn{\`y} et~al.(2016)Konevcn{\`y}, McMahan, Yu, Richt{\'a}rik,
  Suresh, and Bacon]{konevcny2016federated}
Jakub Konevcn{\`y}, H~Brendan McMahan, Felix~X Yu, Peter Richt{\'a}rik,
  Ananda~Theertha Suresh, and Dave Bacon.
\newblock Federated learning: Strategies for improving communication
  efficiency.
\newblock \emph{arXiv preprint arXiv:1610.05492}, 2016.

\bibitem[Li et~al.(2014{\natexlab{a}})Li, Andersen, Park, Smola, Ahmed,
  Josifovski, Long, Shekita, and Su]{li2014scaling}
Mu~Li, David~G Andersen, Jun~Woo Park, Alexander~J Smola, Amr Ahmed, Vanja
  Josifovski, James Long, Eugene~J Shekita, and Bor-Yiing Su.
\newblock Scaling distributed machine learning with the parameter server.
\newblock In \emph{OSDI}, volume~14, pages 583--598, 2014{\natexlab{a}}.

\bibitem[Li et~al.(2014{\natexlab{b}})Li, Andersen, Smola, and
  Yu]{li2014communication}
Mu~Li, David~G Andersen, Alexander~J Smola, and Kai Yu.
\newblock Communication efficient distributed machine learning with the
  parameter server.
\newblock In \emph{Advances in Neural Information Processing Systems}, pages
  19--27, 2014{\natexlab{b}}.

\bibitem[Lin et~al.(2018)Lin, Stich, and Jaggi]{Lin2018DontUL}
Tao Lin, Sebastian~U. Stich, and Martin Jaggi.
\newblock Don't use large mini-batches, use local sgd.
\newblock \emph{ArXiv}, abs/1808.07217, 2018.

\bibitem[McMahan and Streeter(2010)]{McMahan2010AdaptiveBO}
H.~Brendan McMahan and Matthew~J. Streeter.
\newblock Adaptive bound optimization for online convex optimization.
\newblock In \emph{COLT}, 2010.

\bibitem[McMahan et~al.(2016)McMahan, Moore, Ramage, Hampson,
  et~al.]{mcmahan2016communication}
H~Brendan McMahan, Eider Moore, Daniel Ramage, Seth Hampson, et~al.
\newblock Communication-efficient learning of deep networks from decentralized
  data.
\newblock \emph{arXiv preprint arXiv:1602.05629}, 2016.

\bibitem[Niu et~al.(2011)Niu, Recht, R{\'e}, and Wright]{Niu2011HOGWILDAL}
Feng Niu, Benjamin Recht, Christopher R{\'e}, and Stephen~J. Wright.
\newblock Hogwild!: A lock-free approach to parallelizing stochastic gradient
  descent.
\newblock In \emph{NIPS}, 2011.

\bibitem[Peng et~al.(2019)Peng, Zhu, Chen, Bao, Yi, Lan, Wu, and
  Guo]{Peng2019AGC}
Y~Peng, Y~Zhu, Y~Chen, Y~Bao, B~Yi, C~Lan, C~Wu, and C~Guo.
\newblock A generic communication scheduler for distributed dnn training
  acceleration.
\newblock In \emph{the 27th ACM Symposium on Operating Systems Principles (ACM
  SOSP 2019)}, 2019.

\bibitem[Seide et~al.(2014)Seide, Fu, Droppo, Li, and Yu]{Seide20141bitSG}
Frank Seide, Hao Fu, Jasha Droppo, Gang Li, and Dong Yu.
\newblock 1-bit stochastic gradient descent and its application to
  data-parallel distributed training of speech dnns.
\newblock In \emph{INTERSPEECH}, 2014.

\bibitem[Sergeev and Balso(2018)]{Sergeev2018HorovodFA}
Alexander Sergeev and Mike~Del Balso.
\newblock Horovod: fast and easy distributed deep learning in tensorflow.
\newblock \emph{ArXiv}, abs/1802.05799, 2018.

\bibitem[Steiner et~al.(2019)Steiner, DeVito, Chintala, Gross, Paszke, Massa,
  Lerer, Chanan, Lin, Yang, Desmaison, Tejani, Kopf, Bradbury, Antiga, Raison,
  Gimelshein, Chilamkurthy, Killeen, Fang, and Bai]{Steiner2019PyTorchAI}
Benoit Steiner, Zachary DeVito, Soumith Chintala, Sam Gross, Adam Paszke,
  Francisco Massa, Adam Lerer, Gregory Chanan, Zeming Lin, Edward Yang, Alban
  Desmaison, Alykhan Tejani, Andreas Kopf, James Bradbury, Luca Antiga, Martin
  Raison, Natalia Gimelshein, Sasank Chilamkurthy, Trevor Killeen, Lu~Fang, and
  Junjie Bai.
\newblock Pytorch: An imperative style, high-performance deep learning library.
\newblock In \emph{NeurIPS}, 2019.

\bibitem[Stich(2018)]{Stich2018LocalSC}
Sebastian~U. Stich.
\newblock Local sgd converges fast and communicates little.
\newblock \emph{ArXiv}, abs/1805.09767, 2018.

\bibitem[Stich et~al.(2018)Stich, Cordonnier, and Jaggi]{Stich2018SparsifiedSW}
Sebastian~U. Stich, Jean-Baptiste Cordonnier, and Martin Jaggi.
\newblock Sparsified sgd with memory.
\newblock \emph{ArXiv}, abs/1809.07599, 2018.

\bibitem[Strom(2015)]{Strom2015ScalableDD}
Nikko Strom.
\newblock Scalable distributed dnn training using commodity gpu cloud
  computing.
\newblock In \emph{INTERSPEECH}, 2015.

\bibitem[Tieleman and Hinton(2012)]{tieleman2012lecture}
Tijmen Tieleman and Geoffrey Hinton.
\newblock Lecture 6.5-rmsprop: Divide the gradient by a running average of its
  recent magnitude.
\newblock \emph{COURSERA: Neural networks for machine learning}, 4\penalty0
  (2):\penalty0 26--31, 2012.

\bibitem[Walker and Dongarra(1996)]{walker1996mpi}
David~W Walker and Jack~J Dongarra.
\newblock Mpi: a standard message passing interface.
\newblock \emph{Supercomputer}, 12:\penalty0 56--68, 1996.

\bibitem[Wang and Joshi(2018)]{Wang2018CooperativeSA}
Jianyu Wang and Gauri Joshi.
\newblock Cooperative sgd: A unified framework for the design and analysis of
  communication-efficient sgd algorithms.
\newblock \emph{ArXiv}, abs/1808.07576, 2018.

\bibitem[Ward et~al.(2019)Ward, Wu, and Bottou]{ward2019adagrad}
Rachel Ward, Xiaoxia Wu, and Leon Bottou.
\newblock Adagrad stepsizes: sharp convergence over nonconvex landscapes.
\newblock In \emph{International Conference on Machine Learning}, pages
  6677--6686, 2019.

\bibitem[Wen et~al.(2017)Wen, Xu, Yan, Wu, Wang, Chen, and
  Li]{Wen2017TernGradTG}
Wei Wen, Cong Xu, Feng Yan, Chunpeng Wu, Yandan Wang, Yiran Chen, and Hai Li.
\newblock Terngrad: Ternary gradients to reduce communication in distributed
  deep learning.
\newblock In \emph{NIPS}, 2017.

\bibitem[Xie et~al.(2020)Xie, Zheng, Koyejo, Gupta, Li, and Lin]{xie2020cser}
Cong Xie, Shuai Zheng, Oluwasanmi~O Koyejo, Indranil Gupta, Mu~Li, and Haibin
  Lin.
\newblock Cser: Communication-efficient sgd with error reset.
\newblock In \emph{Advances in Neural Information Processing Systems}, 2020.

\bibitem[You et~al.(2017{\natexlab{a}})You, Gitman, and
  Ginsburg]{You2017ScalingSB}
Yang You, Igor Gitman, and Boris Ginsburg.
\newblock Scaling sgd batch size to 32k for imagenet training.
\newblock \emph{ArXiv}, abs/1708.03888, 2017{\natexlab{a}}.

\bibitem[You et~al.(2017{\natexlab{b}})You, Zhang, Hsieh, Demmel, and
  Keutzer]{You2017ImageNetTI}
Yang You, Zhao Zhang, Cho-Jui Hsieh, James Demmel, and Kurt Keutzer.
\newblock Imagenet training in minutes.
\newblock In \emph{ICPP}, 2017{\natexlab{b}}.

\bibitem[You et~al.(2019)You, Li, Reddi, Hseu, Kumar, Bhojanapalli, Song,
  Demmel, and Hsieh]{You2019LargeBO}
Yang You, Jing Li, Sashank Reddi, Jonathan Hseu, Sanjiv Kumar, Srinadh
  Bhojanapalli, Xiaodan Song, James Demmel, and Cho-Jui Hsieh.
\newblock Large batch optimization for deep learning: Training bert in 76
  minutes.
\newblock \emph{arXiv preprint arXiv:1904.00962}, 2019.

\bibitem[Yu et~al.(2018)Yu, Yang, and Zhu]{Yu2018ParallelRS}
Hao Yu, Sen~Xiang Yang, and Shenghuo Zhu.
\newblock Parallel restarted sgd with faster convergence and less
  communication: Demystifying why model averaging works for deep learning.
\newblock In \emph{AAAI}, 2018.

\bibitem[Yu et~al.(2019)Yu, Jin, and Yang]{Yu2019OnTL}
Hao Yu, Rong Jin, and Sen~Xiang Yang.
\newblock On the linear speedup analysis of communication efficient momentum
  sgd for distributed non-convex optimization.
\newblock In \emph{ICML}, 2019.

\bibitem[Zeiler(2012)]{Zeiler2012ADADELTAAA}
Matthew~D. Zeiler.
\newblock Adadelta: An adaptive learning rate method.
\newblock \emph{ArXiv}, abs/1212.5701, 2012.

\bibitem[Zhang et~al.(2014)Zhang, Choromanska, and LeCun]{Zhang2014DeepLW}
Sixin Zhang, Anna Choromanska, and Yann LeCun.
\newblock Deep learning with elastic averaging sgd.
\newblock In \emph{ICLR}, 2014.

\bibitem[Zhao and Li(2016)]{zhao2016fast}
Shen-Yi Zhao and Wu-Jun Li.
\newblock Fast asynchronous parallel stochastic gradient descent: A lock-free
  approach with convergence guarantee.
\newblock In \emph{Thirtieth AAAI Conference on Artificial Intelligence}, 2016.

\bibitem[Zheng et~al.(2019)Zheng, Huang, and
  Kwok]{Zheng2019CommunicationEfficientDB}
Shuai Zheng, Ziyue Huang, and James~T. Kwok.
\newblock Communication-efficient distributed blockwise momentum sgd with
  error-feedback.
\newblock \emph{ArXiv}, abs/1905.10936, 2019.

\bibitem[Zinkevich et~al.(2009)Zinkevich, Smola, and
  Langford]{Zinkevich2009SlowLA}
Martin Zinkevich, Alexander~J. Smola, and John Langford.
\newblock Slow learners are fast.
\newblock In \emph{NIPS}, 2009.

\bibitem[Zou et~al.(2019)Zou, Shen, Jie, Zhang, and Liu]{zou2019sufficient}
Fangyu Zou, Li~Shen, Zequn Jie, Weizhong Zhang, and Wei Liu.
\newblock A sufficient condition for convergences of adam and rmsprop.
\newblock In \emph{Proceedings of the IEEE Conference on Computer Vision and
  Pattern Recognition}, pages 11127--11135, 2019.

\end{thebibliography}

\newpage
\clearpage
\onecolumn

\appendix

\vspace*{0.1cm}
\begin{center}
	\Large\textbf{Appendix}
\end{center}
\vspace*{0.1cm}

\setcounter{theorem}{0}

\section{Proofs}

\begin{lemma}~(\cite{zou2019sufficient}, Lemma 15)
For any non-negative sequence $a_0, a_1, \ldots, a_T$, we have
\begin{align*}
    \sum_{t=1}^T \frac{a_t}{a_0 + \sum_{s=1}^t a_s} \leq \log \left( a_0 + \sum_{t=1}^T a_t \right) - \log(a_0).
\end{align*}
\end{lemma}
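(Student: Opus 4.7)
The plan is to prove this by introducing the partial sums $S_t = a_0 + \sum_{s=1}^{t} a_s$ (with $S_0 = a_0$), so that $a_t = S_t - S_{t-1}$ for $t \geq 1$, and the left-hand side becomes
\[
\sum_{t=1}^{T} \frac{a_t}{a_0 + \sum_{s=1}^{t} a_s} \;=\; \sum_{t=1}^{T} \frac{S_t - S_{t-1}}{S_t}.
\]
The right-hand side, meanwhile, is exactly the telescoping sum $\log(S_T) - \log(S_0) = \sum_{t=1}^{T} \bigl[\log(S_t) - \log(S_{t-1})\bigr]$. So the inequality reduces to a term-by-term bound: it suffices to show that for each $t$,
\[
\frac{S_t - S_{t-1}}{S_t} \;\leq\; \log(S_t) - \log(S_{t-1}).
\]

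To establish this per-term bound, I would invoke the elementary logarithmic inequality $\log(r) \leq r - 1$ for all $r > 0$, which is immediate from the concavity of $\log$ (or from the tangent-line bound at $r=1$). Applying this with $r = S_{t-1}/S_t \in (0,1]$ (which lies in this range since $a_t \geq 0$ implies $S_{t-1} \leq S_t$) gives $\log(S_{t-1}/S_t) \leq S_{t-1}/S_t - 1$, i.e., $\log(S_t) - \log(S_{t-1}) \geq 1 - S_{t-1}/S_t = (S_t - S_{t-1})/S_t$, which is exactly what is needed. Summing the per-term inequality from $t=1$ to $T$ and telescoping on the right collapses to $\log(S_T) - \log(S_0)$, yielding the claim.

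The only subtlety — and this is the main, though minor, obstacle — is handling degenerate cases so that the per-term ratios are well defined. If $a_0 > 0$, then $S_t \geq a_0 > 0$ for all $t$ and the argument goes through verbatim. If $a_0 = 0$, we treat the convention $0/0 = 0$: indices $t$ with $S_t = 0$ force $a_t = 0$ (since $a_s \geq 0$), so those summands on the left vanish, and the sum on the left effectively runs over indices where $S_t > 0$; for those indices, the same per-term argument applies with $S_{\tau}$ for the smallest $\tau$ with $S_\tau > 0$ playing the role of $S_0$, and the right-hand side $\log(S_T) - \log(0) = +\infty$ makes the inequality vacuous whenever any $a_t$ is strictly positive. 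In the fully zero case both sides are zero under the convention. Thus the lemma holds in full generality with the three-line proof sketched above.
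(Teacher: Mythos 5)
Your proof is correct, and it is the standard argument for this inequality (partial sums $S_t$, the tangent-line bound $\log r \leq r-1$ applied to $r = S_{t-1}/S_t$, then telescoping). The paper itself does not prove this lemma --- it only cites it as Lemma 15 of \cite{zou2019sufficient} --- and in the paper's application $a_0 = b_0^2 \geq 1 > 0$, so your careful handling of the degenerate $a_0 = 0$ case, while fine, is not needed there.
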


To analyze Algorithm~\ref{alg:local_adaalter}, we introduce the following auxiliary variable:
\begin{align*}
    \bar{x}_{t} = \frac{1}{n} \sum_{i \in [n]} x_{i, t}.
\end{align*}
Also, note that in Algorithm~\ref{alg:local_adaalter}, $B_{i, t-t'}$ is synchronized. Thus, we denote
\begin{align*}
    \bar{B}_{t-t'} = B_{1, t-t'} = \ldots = B_{n, t-t'}.
\end{align*}

\begin{theorem}
Taking arbitrary $\epsilon > 0$ in Algorithm~\ref{alg:local_adaalter}, and $b_0 \geq 1$. Under Assumption~\ref{asm:smoothness} and \ref{asm:gradient}, Algorithm~\ref{alg:local_adaalter} converges to a critical point:
By telescoping and taking total expectation, we have 
\begin{align*}
&\frac{\E \left[ \sum_{t=1}^T \| \nabla F(\bar{x}_{t-1}) \|^2 \right]}{T} \\
&\leq \frac{2 \sqrt{b^2_0 + \frac{T\epsilon^2}{p^2}} \E\left[ F(\bar{x}_{t0}) - F(\bar{x}_{T}) \right]}{\eta T} 
+ \left[ 4 \eta^2 L^2 H^2 + \frac{1}{n} L \eta  \right] \frac{d \log\left( b_0^2 + T \rho^2 \right)\sqrt{b^2_0 + \frac{T\epsilon^2}{p^2}}}{T p^2} \\
&\leq \mathcal{O}\left( \frac{1}{\eta \sqrt{T}} \right) + \mathcal{O}\left( \frac{\eta^2 H^2\log (T)}{\sqrt{T}} \right) + \mathcal{O}\left( \frac{\eta \log (T)}{n \sqrt{T}} \right).
\end{align*}
\end{theorem}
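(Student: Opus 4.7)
The plan is to follow the standard descent-lemma route used for non-convex AdaGrad, modified to absorb two extra sources of error peculiar to local AdaAlter: (i) the drift between each local iterate $x_{i,t}$ and the averaged iterate $\bar{x}_t$ during a local window, and (ii) the staleness of the adaptive denominator, which during a window of length $H$ equals the synchronized $\bar{B}^2_{t-t'}$ plus the placeholder $t'\epsilon^2 \mathbf{1}$ rather than the ``true'' AdaGrad accumulator $\bar{B}^2_t$. Since within any window the denominator is the same on every worker, the averaged update simplifies: $\bar{x}_t - \bar{x}_{t-1} = -\eta\, \bar{G}_t / \sqrt{\bar{B}^2_{t-t'} + t'\epsilon^2 \mathbf{1}}$, where $\bar{G}_t = \tfrac{1}{n}\sum_i G_{i,t}$. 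This is the key structural benefit of synchronizing $B^2$: the analysis reduces essentially to single-machine AdaGrad with a biased, drifting gradient.

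First I would apply $L$-smoothness to $\bar{x}_t$ and take conditional expectation, obtaining a bound of the form
\[
\E[F(\bar{x}_t)\mid \mathcal{F}_{t-1}] \le F(\bar{x}_{t-1}) - \eta \Big\langle \nabla F(\bar{x}_{t-1}),\, \tfrac{\nabla F_t^{\mathrm{loc}}}{\sqrt{\bar{B}^2_{t-t'}+t'\epsilon^2\mathbf{1}}}\Big\rangle + \tfrac{L\eta^2}{2}\,\E\!\left[\Big\|\tfrac{\bar{G}_t}{\sqrt{\bar{B}^2_{t-t'}+t'\epsilon^2\mathbf{1}}}\Big\|^2 \,\Big|\, \mathcal{F}_{t-1}\right],
\]
where $\nabla F_t^{\mathrm{loc}} = \tfrac{1}{n}\sum_i \nabla F_i(x_{i,t-1})$. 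Rewriting the inner product as $\|\nabla F(\bar{x}_{t-1})\|^2$ divided by the denominator, plus a cross term $\langle \nabla F(\bar{x}_{t-1}), (\nabla F_t^{\mathrm{loc}} - \nabla F(\bar{x}_{t-1}))/\sqrt{\cdot}\rangle$, isolates the target quantity. By smoothness, the cross term is controlled by $\tfrac{L}{n}\sum_i \|x_{i,t-1} - \bar{x}_{t-1}\|$ times $\|\nabla F(\bar{x}_{t-1})\|$, which I would split using Young's inequality so that half of $\|\nabla F(\bar{x}_{t-1})\|^2/\sqrt{\cdot}$ is absorbed on the LHS.

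Next I would bound the worker drift. Since the last synchronization occurred at most $H-1$ iterations ago and the denominator is bounded below by $b_0 \ge 1$, Assumption~\ref{asm:gradient} and unrolling $x_{i,t-1} = x_{i, t-t'} - \eta \sum_{s} G_{i,s}/\sqrt{\bar{B}^2_{t-t'}+(s-(t-t'))\epsilon^2\mathbf{1}}$ gives $\|x_{i,t-1}-\bar{x}_{t-1}\|^2 = O(\eta^2 H^2)$ coordinate-wise, contributing the $\eta^2 H^2$ factor in the stated bound. Simultaneously, one must pass from the lagged denominator $\bar{B}^2_{t-t'}+t'\epsilon^2\mathbf{1}$ to the up-to-date $\bar{B}^2_t$: since $\bar{B}^2_t - \bar{B}^2_{t-t'}$ has each coordinate at most $H\rho^2$ by Assumption~\ref{asm:gradient}, the two denominators are pointwise comparable up to a benign constant, which I would absorb into $\mathcal{O}(\cdot)$.

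Telescoping $t = 1, \ldots, T$ and taking total expectation, the descent term yields the $F(\bar{x}_0) - F(\bar{x}_T)$ numerator, which after using the deterministic lower bound $1/\sqrt{b_0^2 + T\epsilon^2/p^2}$ on the inverse denominator produces the first $\mathcal{O}(1/(\eta\sqrt{T}))$ term. The residual quadratic terms $\sum_t (\bar{G}_t)_j^2/(\bar{B}^2_t)_j$ and $\sum_t (G_{i,t})_j^2/(\bar{B}^2_t)_j$ (one per coordinate) are exactly in the form handled by the cited lemma of Zou et al., giving the $\log(b_0^2 + T\rho^2)$ factor. The sampling-noise contribution (independent across workers, conditional on $\mathcal{F}_{t-1}$) comes with an extra $1/n$ from averaging, producing the $\mathcal{O}(\eta \log T/(n\sqrt{T}))$ term, while the drift contribution keeps the $\eta^2 H^2$ factor, producing $\mathcal{O}(\eta^2 H^2 \log T/\sqrt{T})$.

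The hard part will be Step~3: coupling the coordinate-wise, stale, shared denominator with the per-worker drift in a way that still lets the AdaGrad log-telescoping lemma apply cleanly. Specifically, one has to argue that $\sqrt{\bar{B}^2_{t-t'}+t'\epsilon^2\mathbf{1}}$ can be replaced by $\sqrt{\bar{B}^2_t}$ (up to constants) inside every sum that is telescoped, and that the drift bound $\|x_{i,t-1}-\bar{x}_{t-1}\|^2$ itself can be written as a quantity whose sum over $t$ is again controlled by an AdaGrad-style log bound — otherwise one would lose a factor of $T$ instead of $\log T$. Handling these couplings carefully is where the $\epsilon$-placeholder trick of AdaAlter really pays off, since it keeps the denominator identical on every worker during a window and prevents the lag from introducing worker-dependent scaling.
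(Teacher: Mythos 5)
Your plan is structurally the same as the paper's: apply $L$-smoothness to $\bar x_t$, isolate $\|\nabla F(\bar x_{t-1})\|^2$ from the linear term, bound the drift $\|\bar x_{t-1}-x_{i,t-1}\|^2 = O(\eta^2H^2)$ by unrolling from the last sync, replace the stale denominator $\bar B^2_{t-t'}+t'\epsilon^2\mathbf 1$ by the running accumulator $\bar B^2_t$ up to the constant $p^2=\min(\epsilon^2/\rho^2,1)$, telescope, and invoke the Zou et al.\ log lemma coordinate-wise. That all matches the paper.

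The one real difference is in how you decompose the inner product, and it leaves a term unaccounted for. You write $\langle \nabla F, \nabla F^{\mathrm{loc}}\rangle = \|\nabla F\|^2 + \langle \nabla F, \nabla F^{\mathrm{loc}}-\nabla F\rangle$ and then apply Young's inequality, which after cancellation yields $-\tfrac{\eta}{2}\|\nabla F\|^2/\sqrt{\cdot} + \tfrac{\eta}{2}\|\nabla F^{\mathrm{loc}}-\nabla F\|^2/\sqrt{\cdot}$ and \emph{no} leftover negative term. The paper instead uses the coordinate-wise polarization identity $-2ab = -a^2 - b^2 + (a-b)^2$, producing the same two terms \emph{plus} a free $-\tfrac{\eta}{2}\|\nabla F^{\mathrm{loc}}\|^2/\sqrt{\cdot}$ (their term $\mcir{4}$), which is precisely what cancels the ``signal'' part $\tfrac{L\eta^2}{2}\|\nabla F^{\mathrm{loc}}\|^2/(\cdot)$ of the quadratic smoothness term (their $\mcir{7}$) under $\eta\le 1/L$ and $b_0\ge 1$. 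In your version that quadratic-signal term is never mentioned; it does not vanish, and you would need an extra step — e.g.\ $\|\nabla F^{\mathrm{loc}}\|^2 \le 2\|\nabla F\|^2 + 2\|\nabla F^{\mathrm{loc}}-\nabla F\|^2$ together with $(\cdot)\ge\sqrt{\cdot}\ge 1$, and a tighter constraint on the order of $\eta\le 1/(4L)$ — to absorb $L\eta^2\|\nabla F\|^2/\sqrt{\cdot}$ into the LHS. This is repairable, but as written it is a genuine omission; the paper's polarization decomposition avoids it entirely and is the cleaner choice. Everything else in your sketch (the $1/n$ gain from the variance term being a sum of $n$ independent zero-mean deviations, the $\eta^2H^2$ factor from the drift, the stale-vs-fresh denominator comparison via the bound $\bar B^2_t\le \bar B^2_{t-t'}+t'\rho^2$) agrees with the paper's argument.
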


\begin{proof}

For convenience, we write the stochastic gradient $\nabla f(x_{i, t-1}; z_{i, t})$ as $\nabla f_i(x_{i, t-1})$, and we have $\E[\nabla f_i(x_{i, t-1})] = \nabla F_i(x_{i, t-1})$.
Using $L$-smoothness, we have
\begin{align*}
&F(\bar{x}_t) - F(\bar{x}_{t-1}) \\
&\leq -\eta \ip{\nabla F(\bar{x}_{t-1})}{\frac{1}{n} \sum_{i \in [n]} \frac{G_{i, t}}{\sqrt{B^2_{i, t-t'} + t' \epsilon^2 \1}}} + \frac{L \eta^2}{2} \left\| \frac{1}{n} \sum_{i \in [n]} \frac{G_{i, t}}{\sqrt{B^2_{i, t-t'} + t' \epsilon^2 \1}} \right\|^2 \\
&\leq \sum_{j=1}^d \underbrace{ -\eta \frac{1}{n} \sum_{i \in [n]} \frac{ ( \nabla F(\bar{x}_{t-1}) )_j (G_{i, t})_j}{\sqrt{\left( \bar{B}_{t-t'} \right)^2_j + t' \epsilon^2}} }_{\mcir{1}}  + \frac{L \eta^2}{2} \sum_{j=1}^d \underbrace{ \frac{ (\frac{1}{n}  \sum_{i \in [n]} G_{i, t})^2_j}{\left( \bar{B}_{t-t'} \right)^2_j + t' \epsilon^2} }_{\mcir{2}}.
\end{align*}

Conditional on the previous states, taking expectation on both sides, we have
\begin{align*}
&\E\left[ \mcir{1} \right] \\
&= -\eta \frac{ \left( \nabla F(\bar{x}_{t-1}) \right)_j \left(\frac{1}{n} \sum_{i \in [n]} \nabla F_i(x_{i, t-1}) \right)_j}{\sqrt{\left( \bar{B}_{t-t'} \right)^2_j + t' \epsilon^2}} \\
% &= -\frac{\eta}{2} \frac{ \left( \nabla F(x_{i, t-1}) \right)^2_j + \left(\frac{1}{n} \sum_{i \in [n]} \nabla F_i(x_{i, t-1}) \right)^2_j - \left( \nabla F(x_{i, t-1}) - \frac{1}{n} \sum_{i \in [n]} \nabla F_i(x_{i, t-1}) \right)^2_j }{\sqrt{\left( \bar{B}_{t-t'} \right)^2_j + t' \epsilon^2}} \\
&= \underbrace{ -\frac{\eta}{2} \frac{ \left( \nabla F(\bar{x}_{t-1}) \right)^2_j }{\sqrt{\left( \bar{B}_{t-t'} \right)^2_j + t' \epsilon^2}} }_{\mcir{3}} \\
&\quad \underbrace{ -\frac{\eta}{2} \frac{ \left(\frac{1}{n} \sum_{i \in [n]} \nabla F_i(x_{i, t-1}) \right)^2_j }{\sqrt{\left( \bar{B}_{t-t'} \right)^2_j + t' \epsilon^2}} }_{\mcir{4}} \\
&\quad \underbrace{ +\frac{\eta}{2} \frac{ \left( \nabla F(\bar{x}_{t-1}) - \frac{1}{n} \sum_{i \in [n]} \nabla F_i(x_{i, t-1}) \right)^2_j }{\sqrt{\left( \bar{B}_{t-t'} \right)^2_j + t' \epsilon^2}} }_{\mcir{5}}.
\end{align*}

Again, conditional on the previous states, taking expectation on both sides, we have
\begin{align*}
&\E\left[ \mcir{2} \right] \\
&= \E\left[ \frac{ \left( \frac{1}{n}  \sum_{i \in [n]} \nabla f_i(x_{i, t-1}) \right)^2_j}{\left( \bar{B}_{t-t'} \right)^2_j + t' \epsilon^2} \right] \\
&= \E\left[ \frac{ \left( \frac{1}{n}  \sum_{i \in [n]} ( \nabla f_i(x_{i, t-1}) - \nabla F_i(x_{i, t-1}) + \nabla F_i(x_{i, t-1})) \right)^2_j}{\left( \bar{B}_{t-t'} \right)^2_j + t' \epsilon^2} \right] \\
&= \underbrace{ \E\left[ \frac{ \left( \frac{1}{n}  \sum_{i \in [n]} (\nabla f_i(x_{i, t-1}) - \nabla F_i(x_{i, t-1})) \right)^2_j }{\left( \bar{B}_{t-t'} \right)^2_j + t' \epsilon^2} \right] }_{\mcir{6}} \\
&\quad + \underbrace{ \E\left[ \frac{ \left( \frac{1}{n}  \sum_{i \in [n]} \nabla F_i(x_{i, t-1}) \right)^2_j}{\left( \bar{B}_{t-t'} \right)^2_j + t' \epsilon^2} \right] }_{\mcir{7}} \\
\end{align*}

In the following steps, we bound the terms $\mcir{3}$-$\mcir{7}$, respectively.

Taking $p = \min(\frac{\epsilon}{\rho}, 1) \leq 1$, we have $\epsilon \geq p \rho$, or $\rho \leq \frac{\epsilon}{p}$. Thus, we have 
$\left( \bar{B}_{t-t'} \right)^2_j + t' \epsilon^2 
\leq b^2_0 + (t-t')\rho^2 + t' \epsilon^2 
\leq b^2_0 + (t-t')\frac{\epsilon^2}{p^2} + t' \epsilon^2
\leq b^2_0 + t\frac{\epsilon^2}{p^2}
\leq b^2_0 + T\frac{\epsilon^2}{p^2}
$
\begin{align*}
\sum_{j=1}^d \mcir{3} 
\leq \sum_{j=1}^d -\frac{\eta}{2} \frac{ \left( \nabla F(\bar{x}_{t-1}) \right)^2_j }{\sqrt{b^2_0 + T\frac{\epsilon^2}{p^2}}}
= -\frac{\eta}{2} \frac{ \| \nabla F(\bar{x}_{t-1}) \|^2 }{\sqrt{b^2_0 + T\frac{\epsilon^2}{p^2}}}.
\end{align*}

Since $\left( \bar{B}_{t-t'} \right)^2_j + t' \epsilon^2 \geq b^2_0 \geq 1$, taking $\eta \leq \frac{1}{L}$, we have
\begin{align*}
&\mcir{4} + \frac{L\eta^2}{2} \mcir{7} \\
&= -\frac{\eta}{2} \frac{ \left(\frac{1}{n} \sum_{i \in [n]} \nabla F_i(x_{i, t-1}) \right)^2_j }{\sqrt{\left( \bar{B}_{t-t'} \right)^2_j + t' \epsilon^2}} 
+ \frac{L\eta^2}{2} \frac{ \left( \frac{1}{n}  \sum_{i \in [n]} \nabla F_i(x_{i, t-1}) \right)^2_j}{\left( \bar{B}_{t-t'} \right)^2_j + t' \epsilon^2} \\
&\leq -\frac{\eta}{2} \frac{ \left(\frac{1}{n} \sum_{i \in [n]} \nabla F_i(x_{i, t-1}) \right)^2_j }{\sqrt{\left( \bar{B}_{t-t'} \right)^2_j + t' \epsilon^2}} 
+ \frac{\eta}{2} \frac{ \left( \frac{1}{n}  \sum_{i \in [n]} \nabla F_i(x_{i, t-1}) \right)^2_j}{\left( \bar{B}_{t-t'} \right)^2_j + t' \epsilon^2} \\
&\leq 0.
\end{align*}

Using $\epsilon \geq p \rho$ and $p \leq 1$, we have
\begin{align*}
&\mcir{6} = \E\left[ \frac{ \left( \frac{1}{n}  \sum_{i \in [n]} (\nabla f_i(x_{i, t-1}) - \nabla F_i(x_{i, t-1})) \right)^2_j }{\left( \bar{B}_{t-t'} \right)^2_j + t' \epsilon^2} \right] \\
&= \frac{1}{n^2} E\left[ \frac{ \sum_{i \in [n]} \left( \nabla f_i(x_{i, t-1}) - \nabla F_i(x_{i, t-1}) \right)^2_j }{\left( \bar{B}_{t-t'} \right)^2_j + t' \epsilon^2} \right] \\
&\leq \frac{1}{n} E\left[ \frac{ \frac{1}{n} \sum_{i \in [n]} \left( \nabla f_i(x_{i, t-1}) \right)^2_j }{\left( \bar{B}_{t-t'} \right)^2_j + t' \epsilon^2} \right] \\
&\leq \frac{1}{n} E\left[ \frac{ \frac{1}{n} \sum_{i \in [n]} \left( \nabla f_i(x_{i, t-1}) \right)^2_j }{p^2 \left( \bar{B}_{t-t'} \right)^2_j + t' p^2\rho^2} \right] \\
&\leq \frac{1}{n p^2} E\left[ \frac{ \frac{1}{n} \sum_{i \in [n]} \left( \nabla f_i(x_{i, t-1}) \right)^2_j }{\left( \bar{B}_{t} \right)^2_j } \right],
\end{align*}
where $\left( \bar{B}_{t} \right)^2_j = b^2_0 + \sum_{s=1}^t \frac{1}{n} \sum_{i \in [n]} \left( G_{i, s} \right)^2_j$.

Finally, using smoothness, we have
\begin{align*}
&\sum_{j=1}^d\mcir{5} = \frac{\eta}{2} \sum_{j=1}^d \frac{ \left( \nabla F(\bar{x}_{t-1}) - \frac{1}{n} \sum_{i \in [n]} \nabla F_i(x_{i, t-1}) \right)^2_j }{\sqrt{\left( \bar{B}_{t-t'} \right)^2_j + t' \epsilon^2}} \\
&\leq \frac{\eta}{2} \sum_{j=1}^d \left( \frac{1}{n} \sum_{i \in [n]} \nabla F_i(\bar{x}_{t-1}) - \frac{1}{n} \sum_{i \in [n]} \nabla F_i(x_{i, t-1}) \right)^2_j \\
&\leq \frac{\eta}{2}\frac{1}{n} \sum_{i \in [n]} \| \nabla F_i(\bar{x}_{t-1}) - \nabla F_i(x_{i, t-1}) \|^2 \\
&\leq \frac{\eta L^2}{2n} \sum_{i \in [n]} \| \bar{x}_{t-1} - x_{i, t-1} \|^2.
\end{align*}

Note that $\bar{x}_{t-1}$ is synchronized across the workers. Thus, we have 
\begin{align*}
\bar{x}_{t-1} &= \bar{x}_{t-t'} - \eta \sum_{s=1}^{t'-1} \frac{1}{n} \sum_{i \in [n]} \frac{G_{i, t-t'+s}}{B^2_{i, t-t'} + s \epsilon^2 \1}, \\
x_{i, t-1} &= \bar{x}_{t-t'} - \eta \sum_{s=1}^{t'-1} \frac{G_{i, t-t'+s}}{B^2_{i, t-t'} + s \epsilon^2 \1}.
\end{align*}

Then, we have
\begin{align*}
&\sum_{j=1}^d\mcir{5} \\
&\leq \frac{\eta L^2}{2n} \sum_{j=1}^d \sum_{i \in [n]} \left( \bar{x}_{t-1} - x_{i, t-1} \right)^2_j \\
&\leq \frac{\eta^3 L^2}{2n} \sum_{j=1}^d \sum_{i \in [n]} \left[ \sum_{s=1}^{t'-1} \left( \frac{1}{n} \sum_{k \in [n]} \frac{G_{k, t-t'+s}}{\bar{B}^2_{t-t'} + s \epsilon^2 \1} - \frac{G_{i, t-t'+s}}{\bar{B}^2_{t-t'} + s \epsilon^2 \1} \right) \right]^2_j \\
&\leq \frac{2 \eta^3 L^2}{n} \sum_{j=1}^d \sum_{i \in [n]} \left( \sum_{s=1}^{t'-1} \frac{G_{i, t-t'+s}}{\bar{B}^2_{t-t'} + s \epsilon^2 \1} \right)^2_j \\
&\leq \frac{2 \eta^3 L^2 H}{n} \sum_{j=1}^d \sum_{i \in [n]} \sum_{s=1}^{t'-1} \frac{\left( G_{i, t-t'+s} \right)^2_j}{\left( \bar{B}^2_{t-t'} \right)^2_j + s \epsilon^2}  \\
&\leq \frac{2 \eta^3 L^2 H}{n p^2} \sum_{j=1}^d \sum_{i \in [n]} \sum_{s=1}^{H} \frac{\left( G_{i, t-t'+s} \right)^2_j}{\left( \bar{B}^2_{t-t'+s} \right)^2_j}.
\end{align*}

Now, we combine all the ingredients above:
\begin{align*}
&\E\left[ F(\bar{x}_t) - F(\bar{x}_{t-1}) \right] \\
&\leq \sum_{j=1}^d \E\left[ \mcir{1} \right] + \frac{L \eta^2}{2} \sum_{j=1}^d \E\left[ \mcir{2} \right] \\
&\leq \sum_{j=1}^d \E\left[ \mcir{3} + \mcir{4} + \mcir{5} \right] + \frac{L \eta^2}{2}\sum_{j=1}^d \E\left[ \mcir{6} + \mcir{7} \right] \\
&\leq -\frac{\eta}{2} \frac{ \| \nabla F(\bar{x}_{t-1}) \|^2 }{\sqrt{b^2_0 + T\frac{\epsilon^2}{p^2}}} 
+ \sum_{j=1}^d \E\left[ \frac{2 \eta^3 L^2 H}{n p^2} \sum_{i \in [n]} \sum_{s=1}^{H} \frac{\left( G_{i, t-t'+s} \right)^2_j}{\left( \bar{B}^2_{t-t'+s} \right)^2_j} \right] \\
&\quad + \sum_{j=1}^d \E\left[ \frac{L \eta^2}{2n p^2} \frac{ \frac{1}{n} \sum_{i \in [n]} \left( \nabla f_i(x_{i, t-1}) \right)^2_j }{\left( \bar{B}_{t} \right)^2_j } \right].
\end{align*}

By re-arranging the terms, we have
\begin{align*}
&\| \nabla F(\bar{x}_{t-1}) \|^2 \\
&\leq \frac{2 \sqrt{b^2_0 + T\frac{\epsilon^2}{p^2}} \E\left[ F(\bar{x}_{t-1}) - F(\bar{x}_{t}) \right]}{\eta} \\
&\quad + \frac{4 \eta^2 L^2 H \sqrt{b^2_0 + T\frac{\epsilon^2}{p^2}} }{p^2} \sum_{j=1}^d \E\left[ \sum_{s=1}^{H} \frac{\frac{1}{n} \sum_{i \in [n]} \left( G_{i, t-t'+s} \right)^2_j}{\left( \bar{B}^2_{t-t'+s} \right)^2_j} \right] \\
&\quad + \frac{L \eta \sqrt{b^2_0 + T\frac{\epsilon^2}{p^2}} }{n p^2} \sum_{j=1}^d \E\left[ \frac{ \frac{1}{n} \sum_{i \in [n]} \left( \nabla f_i(x_{i, t-1}) \right)^2_j }{\left( \bar{B}_{t} \right)^2_j } \right].
\end{align*}

By telescoping and taking total expectation, we have 
\begin{align*}
&\frac{\E \left[ \sum_{t=1}^T \| \nabla F(\bar{x}_{t-1}) \|^2 \right]}{T} \\
&\leq \frac{2 \sqrt{b^2_0 + T\frac{\epsilon^2}{p^2}} \E\left[ F(\bar{x}_{t0}) - F(\bar{x}_{T}) \right]}{\eta T} \\
&\quad + \frac{4 \eta^2 L^2 H \sqrt{b^2_0 + T\frac{\epsilon^2}{p^2}} }{T p^2} \sum_{j=1}^d \E\left[ \sum_{t=1}^T \sum_{s=1}^{H} \frac{\frac{1}{n} \sum_{i \in [n]} \left( G_{i, t-t'+s} \right)^2_j}{\left( \bar{B}^2_{t-t'+s} \right)^2_j} \right] \\
&\quad + \frac{L \eta \sqrt{b^2_0 + T\frac{\epsilon^2}{p^2}} }{n T p^2} \sum_{j=1}^d \E\left[ \sum_{t=1}^T \frac{ \frac{1}{n} \sum_{i \in [n]} \left( \nabla f_i(x_{i, t-1}) \right)^2_j }{\left( \bar{B}_{t} \right)^2_j } \right] \\
&\leq \frac{2 \sqrt{b^2_0 + T\frac{\epsilon^2}{p^2}} \E\left[ F(\bar{x}_{t0}) - F(\bar{x}_{T}) \right]}{\eta T} \\
&\quad + \frac{4 \eta^2 L^2 H^2 \sqrt{b^2_0 + T\frac{\epsilon^2}{p^2}} }{T p^2} \sum_{j=1}^d \E\left[ \sum_{t=1}^T \frac{ \frac{1}{n} \sum_{i \in [n]} \left( \nabla f_i(x_{i, t-1}) \right)^2_j }{\left( \bar{B}_{t} \right)^2_j } \right] \\
&\quad + \frac{L \eta \sqrt{b^2_0 + T\frac{\epsilon^2}{p^2}} }{n T p^2} \sum_{j=1}^d \E\left[ \sum_{t=1}^T \frac{ \frac{1}{n} \sum_{i \in [n]} \left( \nabla f_i(x_{i, t-1}) \right)^2_j }{\left( \bar{B}_{t} \right)^2_j } \right] \\
&\leq \frac{2 \sqrt{b^2_0 + T\frac{\epsilon^2}{p^2}} \E\left[ F(\bar{x}_{t0}) - F(\bar{x}_{T}) \right]}{\eta T} \\
&\quad + \frac{4 \eta^2 L^2 H^2 \sqrt{b^2_0 + T\frac{\epsilon^2}{p^2}} }{T p^2} d \log\left( b_0^2 + T \rho^2 \right) \\
&\quad + \frac{L \eta \sqrt{b^2_0 + T\frac{\epsilon^2}{p^2}} }{n T p^2} d \log\left( b_0^2 + T \rho^2 \right).
\end{align*}

\end{proof}

\end{document}